\pgfplotsset{compat=newest}
\algrenewcommand\alglinenumber[1]{\footnotesize #1}
\tikzstyle{every node}=[font=\small]
\pgfplotsset{compat=1.11,
    /pgfplots/ybar legend/.style={
    /pgfplots/legend image code/.code={%
       \draw[##1,/tikz/.cd,yshift=-0.25em]
        (0cm,0cm) rectangle (3pt,0.8em);},
   },
}
\let\NAT@parse\undefined
\newcommand\mydots{\hbox to 1em{.\hss.\hss.}}
\DeclareMathOperator{\initialstate}{\mathcal{I}}
\DeclareMathOperator{\goalstate}{\mathcal{G}}
    \newtheorem{theorem}{Theorem}[section]
\begin{document}

\title{\LARGE \bf
Learning to Search in Task and Motion Planning with Streams
}

\author{
Mohamed Khodeir$^{*\dag}$
\quad
Ben Agro$^{*\dag}$
\quad 
Florian Shkurti$^{\dag}$
\thanks{Manuscript received: August 23, 2022; Revised: November 26, 2022; Accepted: January 16, 2023. }
\thanks{ This paper was recommended for publication by Editor Hanna Kurniawati upon evaluation of the Associate Editor and Reviewers’ comments.}
\thanks{$^*$Authors contributed equally.}
\thanks{$^{\dag}$Robot Vision and Learning Laboratory, University of Toronto Robotics Institute. 
{\tt\small \{m.khodeir, ben.agro\}@mail.utoronto.ca, florian@cs.toronto.edu}}
\thanks{Digital Object Identifier (DOI): 10.1109/LRA.2023.3242201}
}

\markboth{IEEE Robotics and Automation Letters. Preprint Version. Accepted January, 2023}{Khodeir et al.: Learning To Search in TAMP with Streams} 

\maketitle


\IEEEpeerreviewmaketitle

\begin{abstract}
Task and motion planning problems in robotics combine symbolic planning over discrete task variables with motion optimization over continuous state and action variables. Recent works such as PDDLStream~\cite{garrett2020pddlstream} have focused on optimistic planning with an incrementally growing set of objects until a feasible trajectory is found. However, this set is exhaustively expanded in a breadth-first manner, regardless of the logical and geometric structure of the problem at hand, which makes long-horizon reasoning with large numbers of objects prohibitively time-consuming. To address this issue, we propose a geometrically informed symbolic planner that expands the set of objects and facts in a best-first manner, prioritized by a Graph Neural Network that is learned from prior search computations. We evaluate our approach on a diverse set of problems and demonstrate an improved ability to plan in difficult scenarios. We also apply our algorithm on a 7DOF robotic arm in block-stacking manipulation tasks.     
\end{abstract}



    


\section{Introduction}
\label{sec:intro}
Task and motion planning (TAMP) involves searching over both symbolic actions that determine a high-level task sequence and low-level motions that result in feasible trajectories \cite{garrett2020integrated, hierarchical_tamp_now, toussaint_stable_modes}. The symbolic task planner operates on high-level abstractions of the environment, including object definitions, operations that can be applied to them, and pre- and post-conditions that these operations have to satisfy. The motion planner requires a non-abstracted description of the environment, and is responsible for informing the symbolic planner about the kinematic and dynamic feasibility of a proposed coarse task plan, possibly leading to backtracking. This interplay between high- and low-level planning has a significant effect on the total runtime required to discover a solution. 

PDDL \cite{ghallab1998pddl} is a widely used representation for symbolic planners, but requires a pre-discretized state space and does not allow for continuous variables.
PDDLStream~\cite{garrett2020pddlstream} extends PDDL by introducing \textit{streams}, conditional generators that model black-box sampling procedures. Streams enable continuous quantities, such as poses and configurations to be created dynamically, allowing planning without the need for pre-discretization. Existing PDDLStream algorithms use streams to iteratively add more of these quantities to the planning problem until a feasible trajectory can be found, leveraging existing PDDL planners as a subroutine. However, this process can induce very large planning problems, especially in cases that involve long-horizon reasoning with many objects.

\begin{figure}[h]
     \centering
     \begin{subfigure}[b]{0.49\linewidth}
         \centering
         \includegraphics[width=\textwidth]{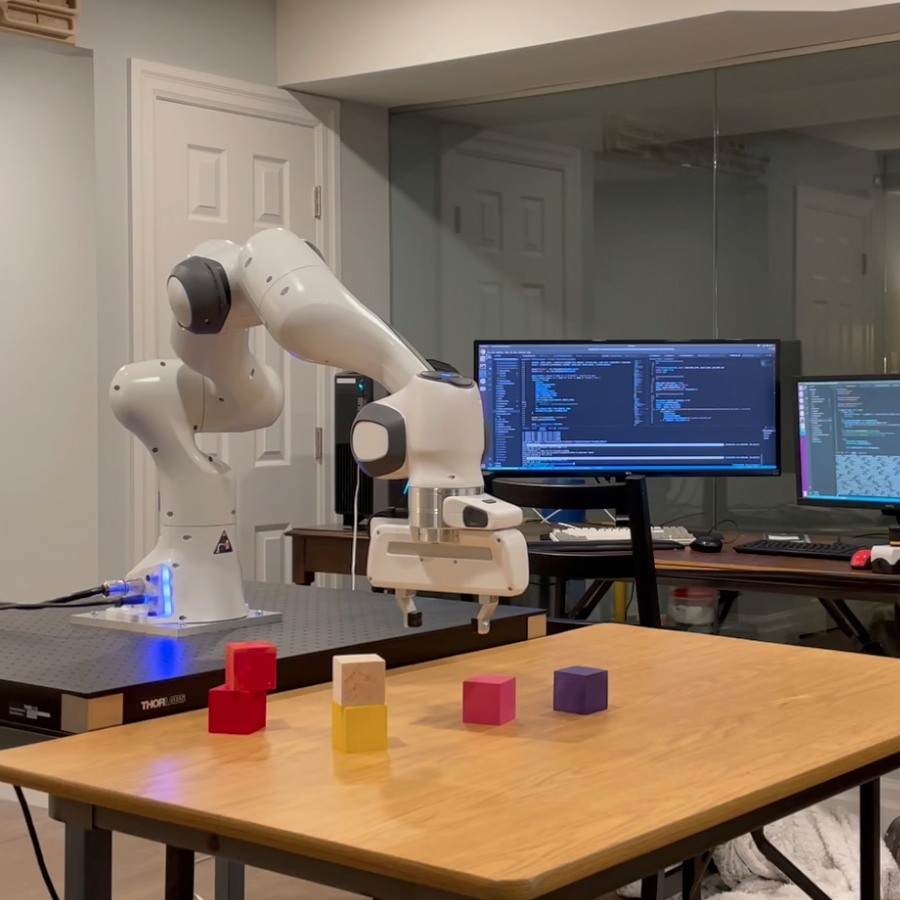}
         \label{fig:stacking_before}
     \end{subfigure}
     \begin{subfigure}[b]{0.49\linewidth}
         \centering
         \includegraphics[width=\textwidth]{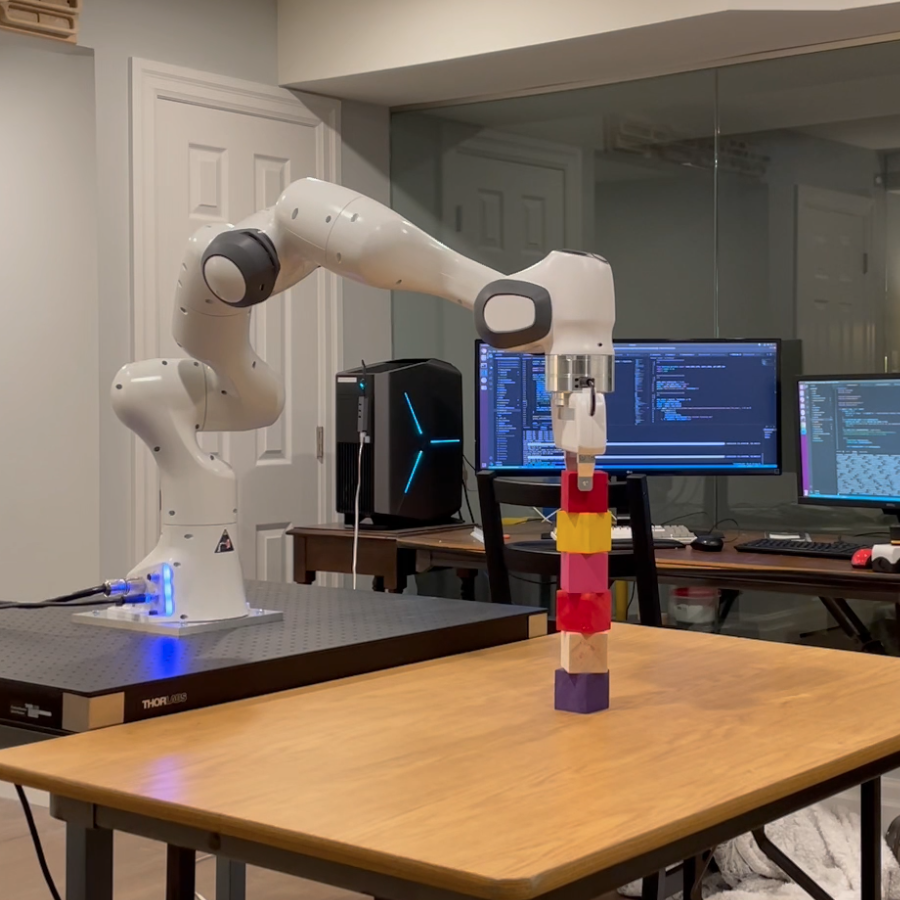}
         \label{fig:stacking_after}
     \end{subfigure}
     \caption{Our planner solves the six-block stacking problem with a 7DOF robot arm\protect\footnotemark. Existing TAMP methods struggle to find a solution for more than four blocks (see Table \ref{tab:stacking_height}). By learning from easier intermediate planning problems, we show improved performance and run time for larger scenes. }
     \label{fig:real_stacking}
\end{figure}
\footnotetext{\label{fnlabel} Code and video can be found on the project webpage: \url{https://rvl.cs.toronto.edu/learning-based-tamp/}}
To address this shortcoming, we present a learning-based method to guide a best-first search in the space of streams. Our main contributions are (a) a Graph Neural Network (GNN) architecture that scores the relevance of streams, and (b) a queue-based algorithm that prioritizes their addition to the planning problem. The model is trained on prior planning experience from solving similar problem instances, drawn from a training distribution of environments. We compare our algorithm (\texttt{INFORMED}) across several simulated environments against the best performing PDDLStream algorithm, as well as a prior work which uses a learned GNN to predict object importance. Our experiments show that \texttt{INFORMED} can outperform baselines on previously unseen problems, finding feasible plans faster, including on problems of larger size, where the baseline fails to find solution. 

\section{Related Work}
\label{sec:Related} 

\textbf{Integrated task and motion planning.} There is a vast literature on the problem of integrating the geometric reasoning required by motion planning with the symbolic reasoning that is necessary for planning to achieve abstract goals; see \cite{garrett2020integrated} for a detailed taxonomy. 
Our work builds on the PDDLStream \cite{garrett2020pddlstream} formalism, which we introduce in more detail in the background section. 
Several algorithms for PDDLStream problems have since been proposed, including \cite{https://doi.org/10.48550/arxiv.2103.05456} which uses Monte-Carlo Tree Search to efficiently explore the space of plans in search of a low cost solution. 
PDDLStream has also been used to facilitate belief-space planning in partially observed environments \cite{9196681}.

There is a long history of prior research, including~\cite{cambon, plaku2010sampling}, which combine symbolic planners with complete geometric planner.
The need for selecting the right hierarchical abstractions for symbolic planning and favoring feasibility and real-time results over optimality was emphasized in~\cite{hierarchical_tamp_now}. Logic Geometric Programming combined symbolic planning and trajectory optimization~\cite{toussaint2015logic, toussaint_stable_modes}, even for dynamic physical motions involving tool use, while~\cite{z3_tamp} integrated sampling procedures with SAT solvers as symbolic planners.    
\nocite{hauser2009integrating}


\textbf{Learning for TAMP}. Motivated by the success of learning in the context of robotics, recent work has sought to combine the ability of TAMP systems to plan for novel temporally extended goals with learning methods. 
There are several avenues of research under this umbrella: methods which learn continuous action samplers for capabilities that may be difficult to engineer (e.g. pouring) \cite{wang2020learning, kim2018guiding}, those which learn the symbolic representations with which to plan \cite{silver2021learning, loula2020learning, diehl2021automated}, those that integrate perception learning and scene understanding into TAMP~\cite{deep_pddl_tamp_policies, Zhu2020HierarchicalPF}, and those which attempt to learn search guidance from experience \cite{driess2020deep, driess2020deeph,beomjoon2020learning, https://doi.org/10.48550/arxiv.2203.10568, kim2019learning}.
In contrast to these works, we seek to learn a heuristic to guide the construction of an optimistic planning problem with a minimal set of facts. These represent contraints on the continuous parameters of a potential solution, and so our work can be seen as a generalization of \cite{kim2019learning}, which similarly learns to select helpful constraints from a discrete set. We leverage an off-the-shelf domain-independent search sub-routine, thus our work can be synergistically combined with methods like \cite{driess2020deep, driess2020deeph,  beomjoon2020learning} that learn a domain-specific heuristic. 

\textbf{Planning with many objects.} The motivation of our work shares similarities with \cite{silver2020planning} and \cite{gnad2019learning}, which attempt to speed up task planning by learning models which exclude irrelevant objects or actions (respectively) from the initial problem representation. Our work can be seen as an extension of these methods.




\section{Background}
\label{sec:Setup}

A PDDLStream\cite{garrett2020pddlstream} problem $(\mathcal P, \mathcal A, \mathcal S, \mathcal O, \initialstate, \goalstate)$ is given by a set of predicates $\mathcal P$, actions $\mathcal A$, streams $\mathcal S$, initial objects $\mathcal O$, an initial state $\initialstate$, and a goal state $\goalstate$. Similarly to PDDL, a \textit{predicate}, $p$ is a boolean function. An instance of a predicate $p(\bar{x})$ applied on a tuple of objects $\bar{x} = \langle x_1, \dots, x_n \rangle$ is called a  \textit{fact}. PDDL \textit{objects} are references to entities in the planning problem. These can refer to anything from the identities of physical objects, to continuous quantities or even complex data structures such as those representing trajectories.
A \textit{state} is a set of facts.  $\initialstate$ is an example of a state. An action $a$ is a tuple of three components: parameters $\bar{X}$, a set of preconditions $\text{pre}(a)$, and a set of effects $\text{eff}(a)$. An action instance $a(\bar{x})$, whose parameters $\bar{X}$ have been assigned to particular objects $\bar{x}$, is only applicable/available in a given state if it satisfies the set of facts in the action's preconditions  $\text{pre}(a(\bar{x}))$. The set of facts $\text{eff}(a(\bar{x}))$ specify how a state will change after $a(\bar{x})$ is applied. A solution to the planning problem is a sequence of valid action instances $\pi = [a_1(\bar{x}_1), \dots a_N(\bar{x}_N)]$ that transform $\initialstate$ into $\goalstate$. A helpful concept is the \textit{preimage} of a plan; the set of facts that must hold in order for the entire plan to be applicable in a given initial state. This is defined as follows: 
$$\text{PREIMAGE}(\pi) = \cup_{i = 1}^N \big( \text{pre}(a_i(\bar{x}_i)) - \cup_{j < i} \text{eff}(a_j(\bar{x}_j) \big).$$

\textbf{Example.} To describe the environment depicted in Fig.~\ref{fig:model}, we might define predicates such as \texttt{on-block(?b1 ?b2)} and \texttt{at(?b ?$X_{WB}$)}. We could then define an action that stacks one block onto another as \texttt{stack(?q ?b ?lb ?$X_{WL}$)} with preconditions that include \texttt{clear(?lb)} and \texttt{at(?lb ?$X_{WL}$)}, requiring that the lower block is clear and at the given world pose $X_{WL}$. One effect of this action might be \texttt{on-block(?b  ?lb)}, but there may be other side-effects such as a new configuration of the robot arm. To describe a particular stacking problem in this environment, we simply need to specify objects $\mathcal{O} = \{b_1, b_0, b_2, a, s_0, s_1, q_0, \dots\}$, initial state $\initialstate = \{\texttt{ on-block}(b_1 \, b_0), \texttt{at}(b_0 \, X_{Wb_0}), \dots\}$, and goal state $\goalstate = \{\texttt{on-surface}(b_0 \, s_1), \texttt{on-surface}(b_1 \, s_1)\}$.

\textbf{Streams.} The set of streams, $\mathcal S$, distinguishes a PDDLStream problem from traditional PDDL. Streams are conditional generators which yield objects that satisfy specific constraints conditioned on their inputs. Formally, a stream, $s$, consists of input parameters $s.input$, a set of domain predicates $s.domain$, output parameters $s.output = \bar{o}$, and a set of certified predicates $s.certified$. $s.domain$ is the set of predicates that must evaluate to true for an input tuple $s.input = \bar{x}$ to be valid. This ensures the correct types of objects (e.g., configurations, poses etc.) are provided to the generators. $s.certified$ are predicates on $s.input$ and $s.output$ that assert facts that $\langle \bar{x}, \bar{o}\rangle$ always satisfy.

\textbf{Example.} In our block stacking environment (Fig. \ref{fig:model}), we can define a stream \texttt{find-place(?b, ?s) $\to$ ?$X_{WB}$} which takes as input a block \texttt{?b} and a surface \texttt{?s}, and produces a world pose \texttt{?$X_{WB}$}, at which \texttt{?b} would be on \texttt{?s}. The \texttt{domain} facts of this stream would restrict the types of its input objects (i.e. \texttt{find-place.domain} = \{\texttt{block(?b)}, \texttt{surface(?s)}\}), and the set of certified facts might contain the single fact \texttt{block-support(?block ?surface ?$X_{WB}$)}. Given such a stream, a planner could dynamically request placement poses for any object/surface pair.


Streams can be applied recursively, to generate a potentially infinite set of objects and their associated facts, starting from those in $\initialstate$ and $\mathcal O$, respectively. A crucial aspect of PDDLStream problems is that there usually does not exist a plan whose \textit{preimage} is a subset of $\initialstate$. Instead, solvers of PDDLStream problems must use the streams in $\mathcal S$ to recursively expand $\initialstate$ to $I_{opt}$ such that there exists a plan $\pi$ whose \textit{preimage} is a subset of $I_{opt}$. This is reminiscent of theorem proving problems that start from a set of axioms and use rules of inference to expand the set of known theorems. 

We use the term stream \textit{instantiation} to refer to the act of creating a new instance of a stream on a specific object tuple. We distinguish this from \textit{evaluation} or \textit{sampling} of a stream instance, which refers to the act of invoking the associated black-box sampler on the given inputs. When a stream instance is instantiated, its output objects (and associated certified facts) are said to be \textit{optimistic} - they are symbolic referents that may or may not have possible real values. In contrast, evaluation of the stream instance produces \textit{grounded} objects (i.e. with a concrete value). A stream instance can only be evaluated if its inputs are themselves grounded.

Existing PDDLStream algorithms expand the set of stream \textit{results} (objects and associated certified facts) in a breadth-first manner according to their so-called `level' . On a given iteration, all streams are instantiated up to a certain level $l$, and a PDDL planner is invoked on the resulting set.  If no plan is found, the next iteration starts with $l \leftarrow l+1$. The `level' of a stream is defined recursively as:
\begin{equation}
\label{eq:1}
\scriptsize
    \texttt{LEVEL}(s(\bar x)) = 1 + \textbf{count}(s(\bar x)) + \underset{ p \in s(\bar x).domain}{\max} \texttt{LEVEL}(p(\bar x).stream)
\end{equation}
Here, \textbf{count} refers to the number of times that a stream instance has been evaluated. A single stream instance may need to be evaluated many times to produce a value which results in a feasible trajectory. For example, the planner may consider many grasps for the same object before finding one that is collision-free.  

\section{Our Approach}
\label{sec:Approach}
Our approach uses a learned model to guide the expansion of optimistic facts in a PDDLStream problem. We first present a PDDLStream algorithm which uses a priority queue to order the instantiation of streams. Second, we formulate the problem of assigning a priority to optimistic stream instantiations as a learning problem. Finally, we propose a model architecture which makes use of the structure of streams to recursively build up representations of optimistic objects, and assign priorities to them simultaneously. 

\subsection{\texttt{INFORMED} Algorithm}
We propose the \texttt{INFORMED} algorithm, which uses a priority queue, $Q$, to order the addition of optimistic results to the planning problem based on a \textit{relevance score} given by a black-box model, $\mathcal M_\theta$. The algorithm is composed of four distinct phases: Expansion, Planning, Sampling, and Feedback. We describe each of these steps below, and provide the full pseudocode in \ref{alg:informed}.


\underline{\textit{Expansion}}. In each iteration of the algorithm, the stream result with the highest score is popped off of the queue and if the result is optimistic, its certified facts are added to $I_{opt}$. A sub-routine \texttt{EXPAND} creates all the child stream instances which use these newly added objects. These are all scored and pushed onto the queue for later expansion (line \ref{lst:expand} in Alg. 1).

\underline{\textit{Planning}}. In the planning phase, we invoke a PDDL planner on the current set of optimistic and grounded facts (line \ref{lst:plan} in Alg. 1). If an optimistic plan $\pi_{opt}$ is found, it is returned along with the \textit{stream plan} $\psi$, which is the sequence of stream instances that need to be evaluated to ground $\pi_{opt}$. This subroutine, which is a light wrapper around the underlying FastDownward planner, is the same as that presented in \cite{garrett2020pddlstream}. A key issue for our algorithm is how to decide when the set of included optimistic facts is sufficient for planning to succeed. This \texttt{SHOULD-PLAN} classifier may itself be a good candidate for learning from experience. However, in this work, we simply choose to plan every time $K = 100$ optimistic and grounded facts have been added to the problem, so in our implementation \texttt{SHOULD-PLAN} returns true every $K$ iterations.

\underline{\textit{Sampling}}. If a plan has just been found or if we have found a plan in a previous iteration without yet grounding all of its parameters, then we have the option to devote time to sampling. This means running the underlying generators for each of the streams in the plan $\psi$. This is facilitated by the \texttt{PROCESS-STREAMS} sub-routine  (line \ref{lst:sample}), of which several variants are described in \cite{garrett2020pddlstream}. In this work, we employ the version used by the \texttt{ADAPTIVE} algorithm in \cite{garrett2020pddlstream}. 

\underline{\textit{Feedback}}. If, in the course of sampling, the algorithm has managed to produce groundings for all the parameters in an optimistic plan, then our algorithm halts and the plan is returned (line \ref{lst:return_success}).
Otherwise, we would like to update our planning problem to account for information gleaned from sampling. By evaluating the streams that support an optimistic plan, \texttt{PROCESS-STREAMS} will produce grounded objects that we already suspect to be relevant. These new values are all immediately added to $I_{ground}$ so that they can be used by the planner in the next expansion and planning phases (line \ref{lst:add_new}). 

\begin{algorithm}[H]%
    \caption{\footnotesize{}INFORMED($\mathcal A, \mathcal S, \initialstate, \goalstate, \mathcal{M}_{\theta}$, PROCESS-STREAMS):}\label{alg:informed}
    \begin{algorithmic}[1]
    \footnotesize
    \State $I_{ground} = \initialstate$.copy(), $I_{opt} = \{\}$, $Q = []$
    \For{$s(\bar{x}) \in \text{INSTANTIATE}(\mathcal S, \initialstate)$} \label{lst:inst1}
        \State r = $s(\bar{x})$.next\_optimistic()
        \State score = $\mathcal{M}_{\theta}(r, I_{ground} \cup I_{opt}, \goalstate)$
        \State $Q$.push$(\langle score, r \rangle)$
    \EndFor
    \While{True}
        \If{$Q\text{.len()} > 0$}
            \State $\langle score, r \rangle = Q.pop()$
            \If{$r\text{.is\_optimistic()}$}
                \State $I_{opt}\text{.add}(r.certified)$
            \EndIf
            \For{$s(\bar{x}) \in \text{EXPAND}(r, I_{ground} \cup I_{opt}, \mathcal{S})$} \label{lst:expand}
                \State r = $s(\bar{x})$.next\_optimistic()
                \State score = $\mathcal{M}_{\theta}(r, I_{ground} \cup I_{opt}, \goalstate)$
                \State $Q$.push$(\langle score, r \rangle)$
            \EndFor
        \EndIf
        \State $\pi_{opt}$ = None
        \If{SHOULD-PLAN()}
            \State $\pi_{opt}, \psi= \text{OPTIMISTIC-SOLVE}(\mathcal{A}, I_{ground} \cup I_{opt}, \goalstate)$ \label{lst:plan}
        \EndIf
        \State $t = \text{SHOULD-SAMPLE}()$
        \If{$ t \le 0$ and $Q\text{.len}() = 0$}
            \State \textbf{return} None 
        \EndIf
        \If{$ t > 0$}
            \State $\begin{aligned}\pi, new, & processed = \\ &\text{PROCESS-STREAMS}(I_{ground} \cup I_{opt}, \psi , \pi_{opt}; t)\end{aligned}$\label{lst:sample}
            \If{$\pi \neq \text{None}$}
                \State \textbf{return}  $\pi$ \label{lst:return_success}
            \EndIf
            \For{$r \in new$}
                \State $I_{ground}.add(r.certified)$ \label{lst:add_new}
                \State score = $\mathcal{M}_{\theta}(r, I_{ground} \cup I_{opt}, \goalstate)$
                \State $Q\text{.push}(\langle score, r \rangle)$\label{lst:score_new}
            \EndFor
            \For{$r \in processed$}
                \State $I_{opt}\text{.remove}(r.certified)$\label{lst:remove_opt}
            \EndFor
        \EndIf
    \EndWhile
    \end{algorithmic}
\end{algorithm}
\vspace{-2em}
\begin{algorithm}[H]%
    \caption{\footnotesize{}INSTANTIATE ($\mathcal S, \mathcal I)$:}\label{alg:instantiate}
    \begin{algorithmic}[1]
    \footnotesize
    \State  $\{ s(\bar{x}) | \forall s \in \mathcal S, \forall p \in s.domain, |\bar{x}| = |s.input|, p(\bar{x}) \in \mathcal I\}$
    \end{algorithmic}
\end{algorithm}
\vspace{-2em}
\begin{algorithm}[H]
    \caption{\footnotesize{}EXPAND (r, $\mathcal S, \mathcal I)$:}\label{alg:expand}
    \begin{algorithmic}[1]
    \footnotesize
    \State $\begin{aligned}
            \{ s_j(\bar{x}) | \forall s_j(\bar{x}) \in \texttt{INSTANTIATE}(\mathcal S, r.certified \cup \mathcal I), r.output \subset \bar{x} \}
        \end{aligned}$
    \end{algorithmic}
\end{algorithm}
\vspace{-2em}




\subsection{Completeness of the \texttt{INFORMED} algorithm}
The algorithm described above inherits many of the same theoretical guarantees as those described in \cite{garrett2020pddlstream}. In particular, the proof of semi-completeness for optimistic algorithms (i.e. theorem 3) can be extended to the \texttt{INFORMED} algorithm under some assumptions about the relevance scores. We omit the proof from the paper for brevity, but include it in supplementary materials on the project webpage\footref{fnlabel}.

\subsection{Relevance of Streams as Binary Classification}
\label{sec:relevance}
The question of whether a stream instance is relevant to the planning problem can be boiled down to whether any of the objects that it produces are relevant. In this work we define an object to be relevant to a PDDLStream problem if: (1) it is part of the preimage of any optimal plan, or (2) it is used as input to a stream which produces a relevant object. We note, however, that applying this definition directly would be impractical. It is prohibitively expensive to compute optimal plans for problems of interest, so we apply this definition using satisficing plans found by existing PDDLStream algorithms.

\underline{\textit{Relevance of a PDDL object.}} In a TAMP setting, the objects in the preimage of a given plan will correspond to real valued quantities, such as the 7DOF configuration of a robotic arm. However, these values are rarely ever unique; that a specific grasp configuration of an object was used in a plan should imply that a different grasp of the same object is also important to consider. Therefore, when deciding on the relevance of an object, we concern ourselves not with the actual numeric value of the object, but with the set of constraints that act on it. 
To determine whether an object is relevant, we simply need to check whether there exists a counterpart in a given preimage which has the same set of constraints (i.e. the same ancestor streams).

\underline{\textit{Training data.}} This approach allows us to cast the relevance of stream instances as a supervised binary classification problem. Given a set of training problems $D = \{\langle \mathcal O_i, \initialstate_i, \goalstate_i \rangle\}_{i = 1}^N$ defined by objects $\mathcal O_i$, initial state $\initialstate_i$, and goal $\goalstate_i$ we use an existing PDDLStream solver to find a plan $\pi_i$, and record all the stream instances $s_{ij}$ produced along the way. For each $s_{ij}$, we use $\text{PREIMAGE}(\pi_i)$ to assign a binary label $y_{ij}$ as described above. 

\subsection{Modelling Relevance of Streams}
\label{sec:Scoring}

Given the $\initialstate$, $\goalstate$, the current set of optimistic stream results $I_{opt}$ and grounded stream results $I_{ground}$, we want to assign a \textit{relevance score} $y \in [0, 1]$ to a stream result $s(\bar{x}) \to \bar{o}$.

\underline{\textit{MLP representations of streams.}} Our model architecture is motivated by the observation that a stream represents constraint(s) between its inputs $\bar{x}$ and outputs $\bar{o}$, and that for each domain there are only a handful of explicitly defined streams. We model each stream $s$ with a simple multi-layer perceptron (MLP) $M_{s}$, which has fixed input and output sizes proportional to $|s.\text{input}|$ and $|s.\text{output}|$. Given fixed dimensional ($N=64$ in our experiments) dense embeddings $h_{x_i}$ of the input objects $\bar x$, $M_{s}$ produces an embedding $h_{o_j}$ of the same dimension to each of the output objects in $\bar o$, as well as a scalar score $y$ representing the relevance of those outputs to the planning problem. For an instance of stream $s$ with input objects $\bar x$, the computation of $M_{s}$ is as follows (note $\odot$ denotes concatenation):
\begin{equation}
\label{eq:encode_mlp}
h = M_{s}^{encoder}(\underset{x_i \in \bar x}{\odot} h_{x_i})
\end{equation}
\begin{equation}
\label{eq:score_mlp}
y = M_{s}^{scorer}(h)
\end{equation}
\begin{equation}
\label{eq:decode_mlp}
h_{o_j} = M_{s_i}^{decoder^j}(h)
\end{equation}

As depicted in the bottom right of Fig.~\ref{fig:model}, these $M_{s}$ can be recursively combined in exactly the same way as streams, producing relevance scores and dense vector representations for each object their associated streams create. A similar architecture has been used for natural language processing \cite{socher2011parsing} where it is referred to as a tree-recursive recurrent neural network. \textbf{Example.} Figure \ref{fig:model} depicts the flow of computation in our model for a simple example. In order to predict a score for the grasp pose $g_0^*$ of block $b_2$, the GNN embedding of $b_2$ is passed into the MLP for the grasp stream, $M_{grasp}$. The score is then computed as in equations \ref{eq:encode_mlp}, \ref{eq:score_mlp}. Similarly, the vector representation of $g_0^*$ is computed as in equation \ref{eq:decode_mlp} to produce $h_{g_0^*}$, which can then be used in subsequent streams.
\begin{figure}
    \centering
    \includegraphics[width = 0.9\linewidth]{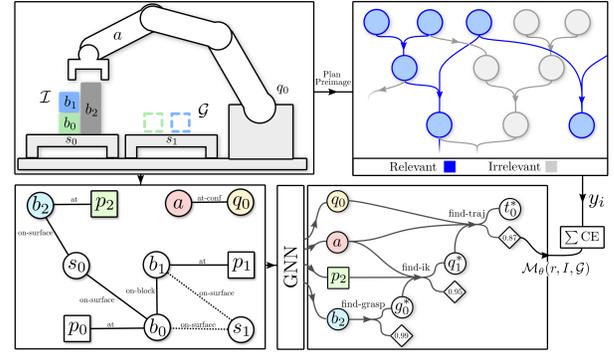}
    \caption{Top row: A block stacking task is solved and a set of labeled stream results is extracted from the plan. Bottom left: the Problem Graph representation of the problem. Bottom right: GNN-Stream is trained to predict relevance of optimistic stream results.}
    \label{fig:model}
\end{figure}

\underline{\textit{GNN representations of objects and facts.}} To obtain the representations for the initial objects in the planning problem, we employ \textit{Graph Neural Networks} owing to their ability to represent objects based on their relations to one another. We provide details about the architecture and hyperparameters of the GNN in supplementary materials. The input to the GNN, which we call the \textit{Problem Graph} (PG), is a graph with nodes representing the PDDL objects and edges representing the facts that relate those objects to one another in both $\initialstate$ and $\goalstate$ (see bottom left of Fig.~\ref{fig:model}). This relational representation is similar to that used in \cite{silver2020planning}, and allows our system to generalize to different numbers and identities of objects. The node feature vectors consist of the 3D position of the object if it exists, and padding otherwise. The edge feature vectors have a one-hot encoding of their corresponding predicate, and a flag if the fact appears in $\initialstate$ (1) or $\goalstate$ (0). This graph representation is extensible and can include other geometric (or semantic) information about the scene. Since our experiments use only cubes of fixed sizes, we do not need to encode the geometry of each object in the scene. However, in principle, data such as point-clouds or depth-images can be used as node features to support planning in more diverse scenarios, but this is out of scope for this work.

The end-to-end model architecture $\mathcal M_\theta$ is depicted in the bottom row of Fig.~\ref{fig:model}. A given planning problem is used to produce a Problem Graph. A GNN produces an embedding of each object according to its relations in both the goal and the initial state. These representations can then be used as input to stream MLPs, which are combined recursively to obtain representations and scores for optimistic objects produced by their associated streams. This entire computation is differentiable, so $\mathcal{M}_{\theta}$ is trained end-to-end to minimize a weighted cross-entropy, giving a higher penalty to false negatives. $\mathcal L(D, \theta) = \sum_{i = 1}^N \sum_{j = 1}^{M_i} \mathcal{L}_{CE}(y_{ij}, \mathcal{M}_{\theta}(\initialstate_i, \goalstate_i, s_{ij}(\bar{x}_{ij})))$

\section{Experimental Results}
\label{sec:Experiments}
\subsection{Environment Setup.}
Evaluation of \texttt{INFORMED} was conducted across six problem types involving a 7DoF robot arm, loosely based on those outlined in \cite{lagriffoul2018platform}. All experiments are conducted using 2 cores of an Intel Broadwell processor with an 8GB memory limit. For each problem type, we randomly generate 100 instances which are used to train a separate model and 100 held-out test instances which we use for evaluation. We additionally report results for training a single `multi-task' model in table \ref{tab:combined}. \textit{The training instances are drawn from distributions with few objects/goals so that the baseline is able to solve the majority of them within the timeout. We sample the test instances from a more challenging distribution to evaluate the model's ability to generalize to problems with more objects and goal conjuncts than those seen during training.} Initial placement of objects and their goal states are randomly generated in each problem, so that even problems of the same type with the same number of objects may have different solutions. All methods use a 90 second planning timeout, so we report the proportion of problems solved within the timeout, as well as average planning times for solved instances.

\begin{figure}
     \centering
     \caption{Typical example of the evaluation tasks in each environment.}
     \begin{subfigure}[b]{0.32\linewidth}
         \centering
         \includegraphics[width=\textwidth]{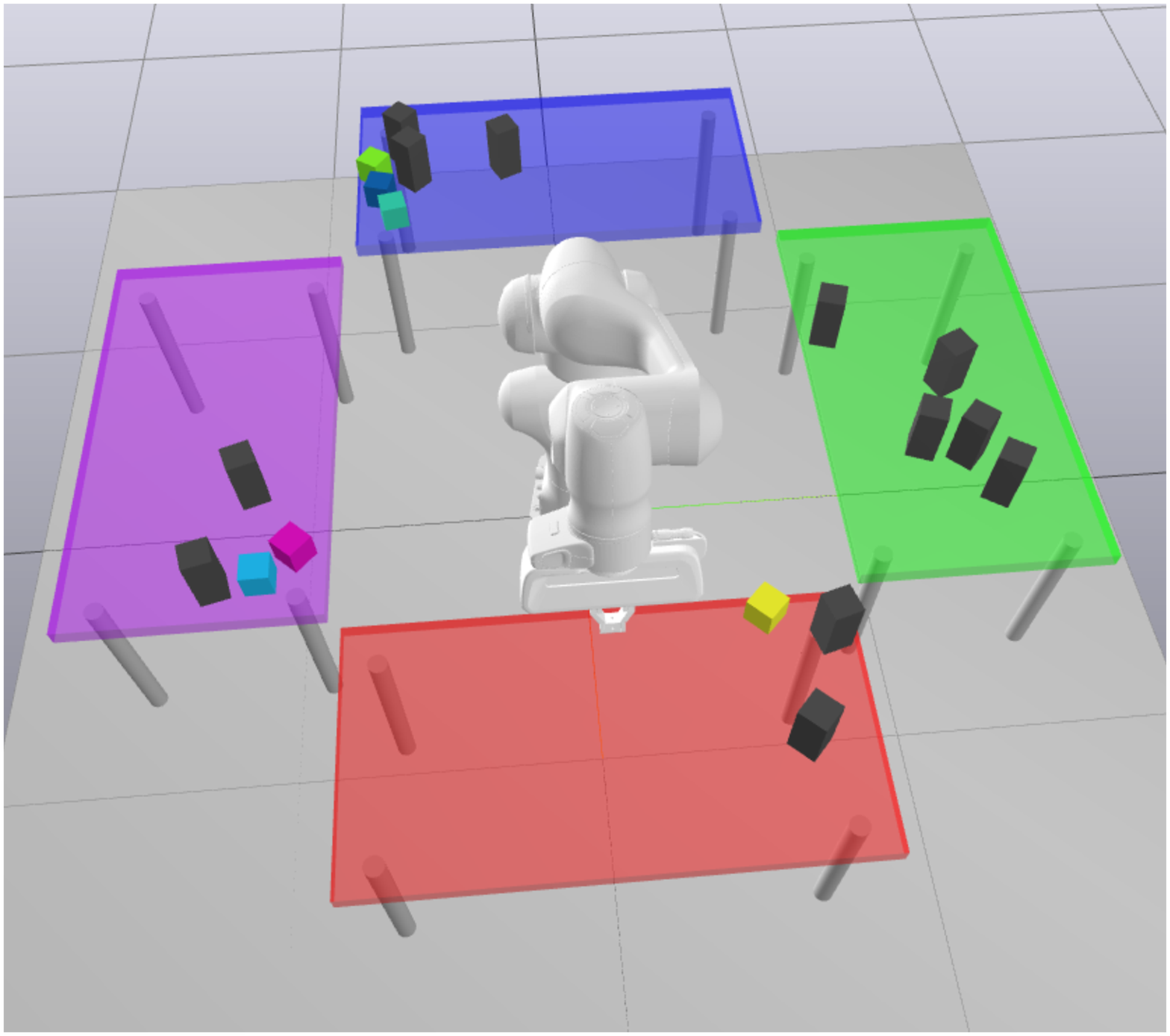}
         \caption{Clutter}
         \label{fig:clutter}
     \end{subfigure}
     \begin{subfigure}[b]{0.32\linewidth}
         \centering
         \includegraphics[width=\textwidth]{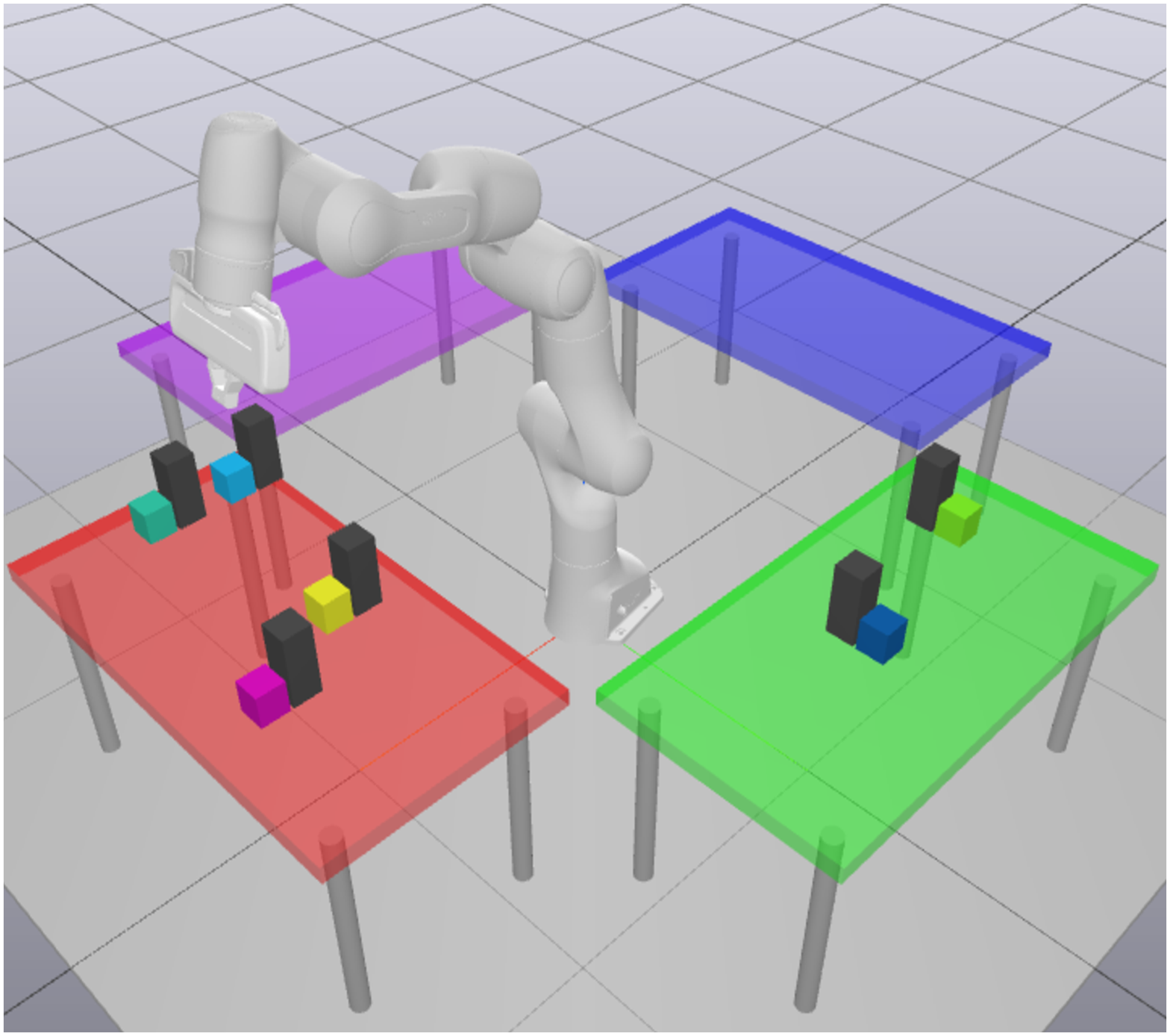}
         \caption{Non-monotonic}
         \label{fig:non-monotonic}
     \end{subfigure}
     \begin{subfigure}[b]{0.32\linewidth}
         \centering
         \includegraphics[width=\textwidth]{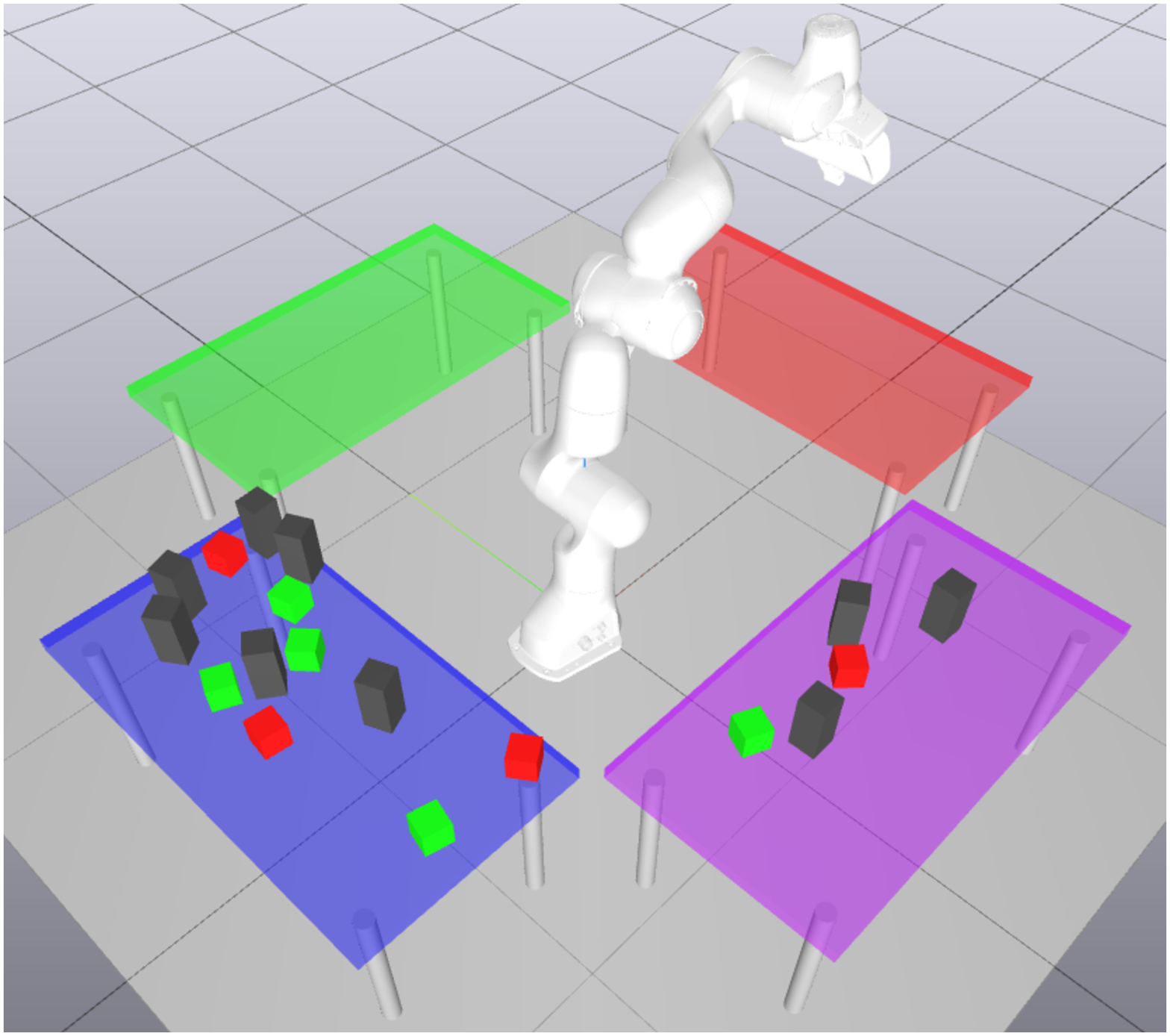}
         \caption{Sorting}
         \label{fig:sorting}
     \end{subfigure}
     \begin{subfigure}[b]{0.32\linewidth}
         \centering
         \includegraphics[width=\textwidth]{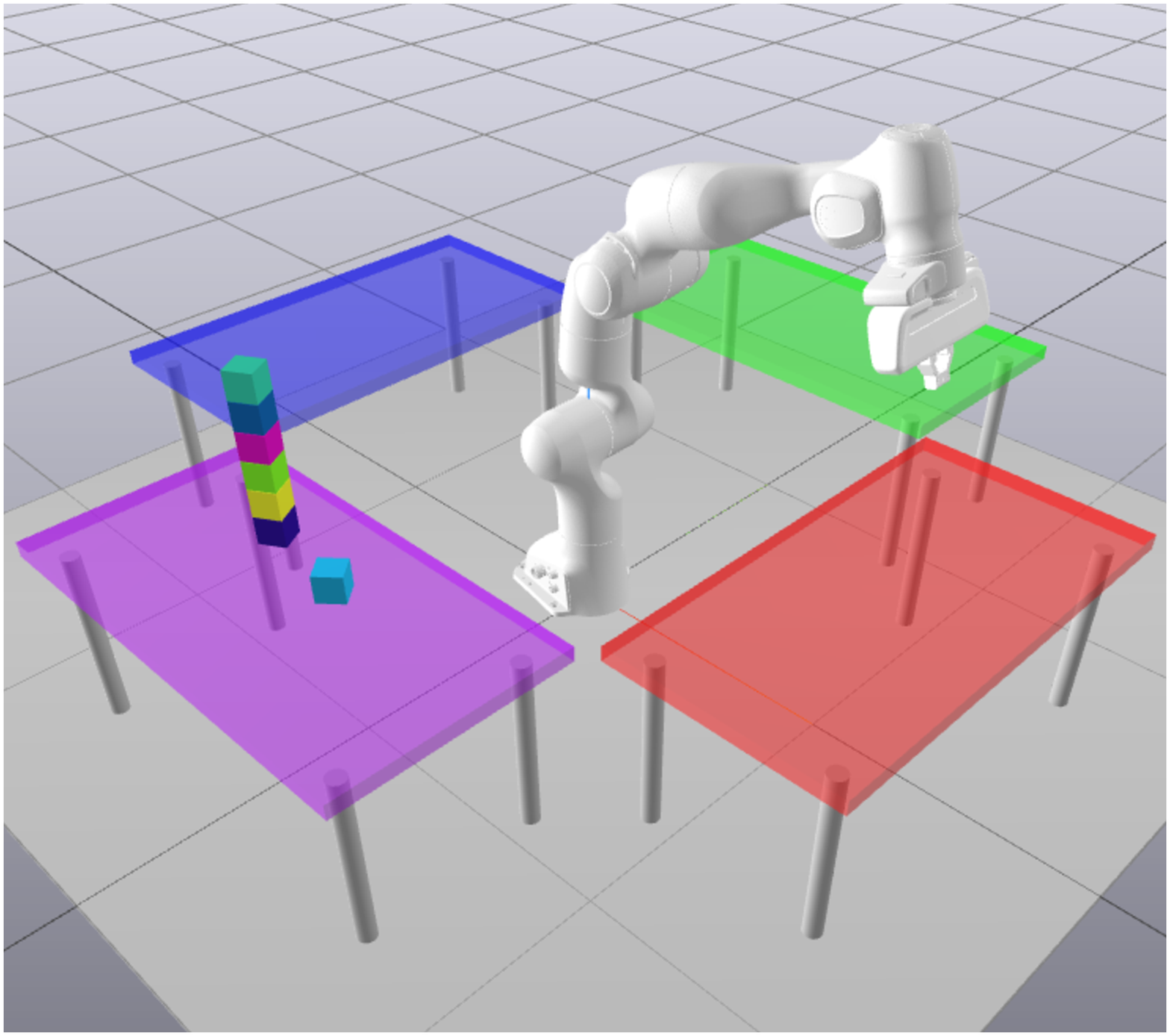}
         \caption{Stacking}
         \label{fig:stacking}

     \end{subfigure}
     \begin{subfigure}[b]{0.32\linewidth}
         \centering
         \includegraphics[width=\textwidth]{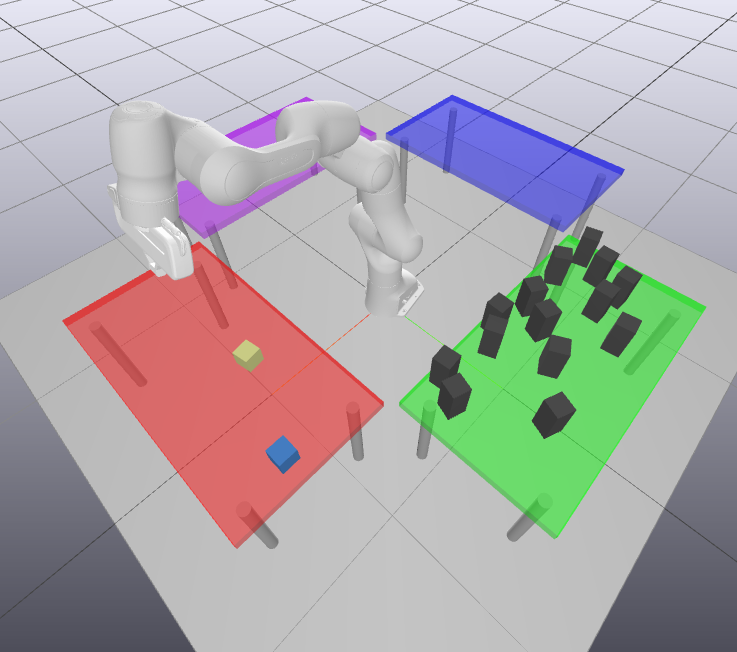}
         \caption{Distractors}
         \label{fig:distractors}

     \end{subfigure}
     \begin{subfigure}[b]{0.32\linewidth}
         \centering
         \includegraphics[width=\textwidth]{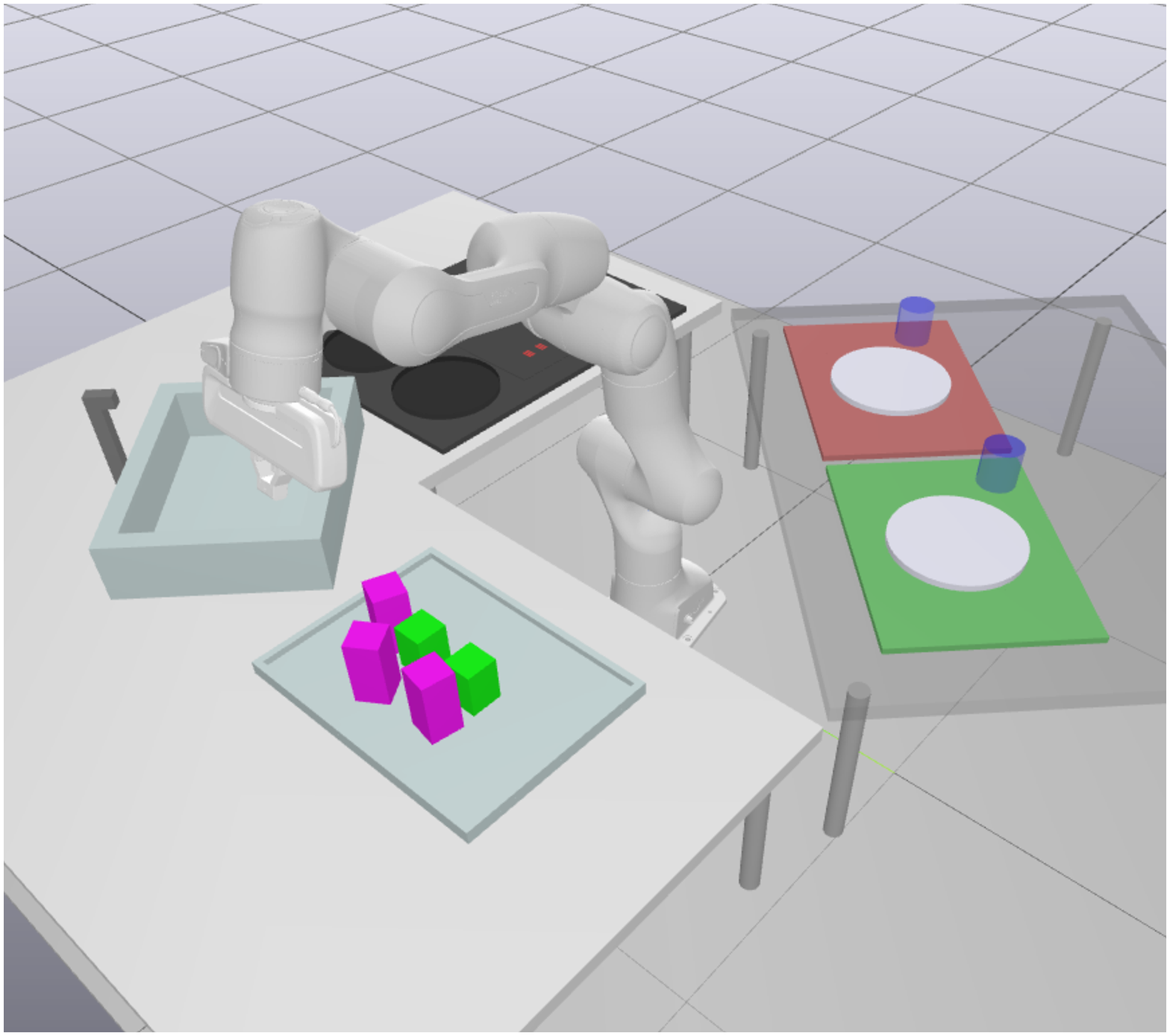}
         \caption{Kitchen}
         \label{fig:kitchen}
     \end{subfigure}
     \label{fig:experiments}
\end{figure}

\textbf{Problem Types:} Blocks world problems are divided into five categories which share a domain definition, but present different challenges to the planner, as shown in Fig.~\ref{fig:experiments}.  There are two types of blocks in these problems (not distinguished logically): blocks (shorter) and blockers (taller) (Figs.~\ref{fig:clutter} and ~\ref{fig:distractors}). The robot arm is surrounded by four tables.

\underline{Stacking:} Blocks are arranged randomly in each scene, and the goal is to assemble them into specific towers. These problems involve long plans, low likelihood of obstructions. Training problems involved up to 4 objects and test problems up to 7.

\underline{Sorting:} Colorful are arranged arranged randomly, and the goal is to move them to the table with the corresponding color. Blockers may need to be moved if they obstruct a plan, but must be returned to their original tables. Sorting problems have high branching factor, long plans, high likelihood of obstructions. Training problems involved up to 14 objects and test problems up to 20.

\underline{Clutter:} Blocks need to be stacked or rearranged in the presence of twice as many blockers which may obstruct motions, grasps or placements. Initial positions are sampled using ordered Poisson-Disc Sampling so that blocks are placed close together. These problems have a high branching factor, short plans, high likelihood of obstructions. Training problems involved up to 15 objects, test problems up to 18.

\underline{Non Monotonic:} Blockers and blocks come in pairs (see Fig.~\ref{fig:non-monotonic}). Blocks have to be moved to a different table, whereas blockers must be returned to their exact starting positions. It is always infeasible to grasp the block without first moving its corresponding blocker, so the planner must temporarily undo subgoals (hence the name). These problems have low branching factor, long plans, guaranteed obstructions. Training problems involved up to 6 objects and test problems up to 8.

\underline{Distractors:} Blocks need to be stacked or rearranged in the presence of "distractor" blockers which are placed on their own table. Unlike the blockers in other problem types, distractors do not appear in the goal, and do not need to be interacted with. This problem type tests the planner's ability to ignore irrelevant objects. Test problems contain 2-3 blocks and up to 50 distractors. There are no training problems - instead, the model trained on the Clutter task is used here.

\underline{Kitchen:} In these problems, a robot arm must complete tasks in a kitchen involving different types of items. Items can be cleaned if they are in the sink, and clean items can be cooked if they are on a burner (Fig.~\ref{fig:kitchen}). The plans in this domain can be long, but the branching factor is lower than blocks world due to the absence of stacking actions. 

\textbf{Methods:} We compare our approach to \texttt{ADAPTIVE}, the best-performing PDDLStream algorithm presented in \cite{garrett2020pddlstream}. Our approach differs from \texttt{ADAPTIVE} in two key respects. First, while \texttt{ADAPTIVE} must expand all streams up to a given depth before planning, \texttt{INFORMED} will invoke the planner after a constant number of additional streams have been expanded. The second key difference is that \texttt{INFORMED} uses a given relevance model to order the expansion of streams. In order to isolate the effect of the model from that of planning frequency, we evaluate the following variants of our main method. Performance differences between these ablations and \texttt{INFORMED} can only be attributed to the learned model.
\begin{itemize}[wide, labelwidth=!, labelindent=0pt,noitemsep,nolistsep]
    \item \texttt{INFORMED} (Level):  Stream expansion is ordered by level (as defined in equation \ref{eq:1}). Ties are broken arbitrarily. This ablation does not use the training data. This ordering is the same as that used by \texttt{ADAPTIVE}, so only differs in the frequency with which the PDDL planner is invoked.
    \item \texttt{INFORMED} (Stats):  Stream expansion is ordered based on the proportion of times the associated stream was marked relevant in the training data. This ablation assigns the same score to all instances of a given stream.
    \item \texttt{INFORMED} (PG): An ablation of our model architecture which makes use of the problem graph network, but not the recursively combined stream MLPS. The problem graph network assigns an importance score to each object in the initial scene. During planning, a candidate stream instance's priority is the average importance score of its ancestors, divided by its depth.
    \item \texttt{INFORMED} (Ours): Our proposed approach, as described in section \ref{sec:Scoring}.
\end{itemize}
Finally, we also compare to a prior work \cite{silver2020planning} (\texttt{PLOI}) which uses learned GNNs to predict the importance of objects in the planning problem. \texttt{PLOI} treats the PDDLStream solver as a black box, and simply excludes all objects below a threshold score from the call to the solver. In order to maintain completeness, an outer-loop iteratively reduces the importance threshold until a plan is found, or all objects pass the threshold.

\textbf{Refined vs. unrefined mode:} \texttt{ADAPTIVE} and \texttt{INFORMED} admit two modes of operation with respect to stream expansions. The first, \textit{refined} expansion, operates exactly as described in \cite{garrett2020pddlstream} and section \ref{sec:Setup} - optimistic objects from different stream instances are assigned unique identifiers which are used in further stream expansions. In contrast, the second mode of operation, \textit{unrefined}, groups together all objects produced by a specific stream under a single shared identifier. This mode has the effect of bounding the combinatorial explosion from stream expansions, since it effectively limits the number of new objects. This comes at the expense of additional work, which must be done to disambiguate shared identifiers in the event that a plan is found which uses them. In our testing, we observed that this trade-off often results in faster planning times for \texttt{ADAPTIVE}. The relevance model we have described cannot score these unrefined streams, because of the inherent ambiguity in their meaning. In order to evaluate the effect of the model, we use both algorithms in \textit{refined} mode. However, for the sake of completeness, we also report \texttt{ADAPTIVE}'s performance in unrefined mode. 

\subsection{Results and Discussion}

\begin{table}[h!]
\caption{Planning performance on held-out \textit{test} problems. All runs are conducted with 90s timeouts. We report the number of solved instances (out of 100), and the average times over all \textbf{solved} problems of a given type.}
\tabcolsep=0.05cm
\centering
\tiny
\begin{tabular}{c|cc|cc|cc|cc|cc|cc|cc}
\toprule
 & \multicolumn{2}{c|}{\textbf{\texttt{ADAPTIVE}}} & \multicolumn{2}{c|}{\textbf{\texttt{ADAPTIVE}}} & \multicolumn{2}{c|}{\textbf{\texttt{INFORMED}}}& \multicolumn{2}{c|}{\textbf{\texttt{INFORMED}}} & \multicolumn{2}{c|}{\textbf{\texttt{INFORMED}}}  & \multicolumn{2}{c|}{\textbf{\texttt{INFORMED}}}  & \multicolumn{2}{c}{\textbf{\texttt{\texttt{PLOI}}}} \\
 & \multicolumn{2}{c|}{\textbf{(Unrefined)}} & \multicolumn{2}{c|}{\textbf{(Refined)}} & \multicolumn{2}{c|}{\textbf{(Level)}}& \multicolumn{2}{c|}{\textbf{ (Stats)}} & \multicolumn{2}{c|}{\textbf{ (PG)}} & \multicolumn{2}{c|}{\textbf{ (Ours)}} & \multicolumn{2}{c}{\cite{silver2020planning}} \\
\midrule
\textbf{Domain} & \textbf{Solved} & \textbf{Time}  & \textbf{Solved} & \textbf{Time} & \textbf{Solved} & \textbf{Time}& \textbf{Solved} & \textbf{Time} & \textbf{Solved} & \textbf{Time}& \textbf{Solved} & \textbf{Time} & \textbf{Solved} & \textbf{Time} \\  \midrule
\textbf{Clutter      } & \textbf{54} &30.99&           22 &    18.20 &  5 &    37.55 & 25 &   11.67 & 32 & 15.22&       \textbf{54} &    12.15  & 49 &  16.11 \\
\textbf{Non Mono} &  27&31.45&         28 &    36.96 &  15 &    73.99 & 0 &    - &      1 & 35.68 & \textbf{58} &    33.88    & 25 & 29.71\\
\textbf{Sorting      } & 66&16.95&          50 &    25.11 &  12 &    15.46 & 62 &    17.65 & 51&  14.60&   \textbf{77} &    21.07   & 68 & 20.47\\
\textbf{Stacking     } &   47&16.95&        20 &    5.52 &  21 &   12.38 & 16 &    4.70 & 21 & 9.04 &    \textbf{54} &    7.94  & 47 & 9.20\\
\textbf{Distractors  } &  77 &47.82&         1 &    42.74 &  1 &   23.17 & 0 &   - &  66 & 23.21 &   \textbf{100} &    9.65   & \textbf{100}  & 6.62\\
\textbf{Kitchen      } & 91 & 19.90 & \textbf{95} & 27.24 & 83 & 24.51 & 84 & 28.52 & 86 & 24.51 &94 & 23.93   & 91 &28.76 \\
\bottomrule
\end{tabular}
\label{tab:refined}
\end{table}
\begin{figure}[h!]
    \centering
    \input{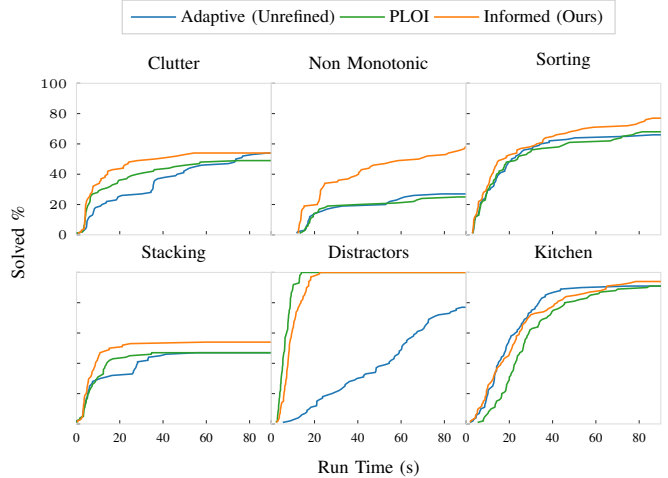}
     \vspace{-1.6em}
    \caption{A comparison of \texttt{ADAPTIVE}, \texttt{PLOI} and \texttt{INFORMED} on percentage of test-distribution problems solved as a function of run time. Test-distribution includes problems with more objects and goals than seen in training.}
    \label{fig:performance_charts}
 \end{figure}
 
\underline{\textit{Ablations.}} Table \ref{tab:refined} shows the success rates and planning times for the ablations described above. On all problem types, the learned model is able to significantly outperform ablations, showing that it is able to prioritize the inclusion of task-relevant objects enabling the planner to solve more problems within the timeout. The Level ablation generally performs poorly compared to \texttt{ADAPTIVE}, despite using the same underlying ordering of stream expansions. This is primarily due to more frequent planner invocations using up more of the allotted time. We found that the Stats ablation is able to significantly outperform \texttt{ADAPTIVE} in Sorting problems, when using refined expansion. Upon closer inspection, we found that the training examples for Sorting contain no stack or unstack actions, which are indeed unnecessary in this task. By assigning a score of zero to objects which would be required for those actions, the Stats ablation is able to effectively reduce the branching factor of its search, and thus solve more problems compared to \texttt{ADAPTIVE}. The PG ablation performed significantly better than others on Distractors problems, where many initial objects were wholly irrelevant. However, it does not make use of the precise structure of relations that relate the stream's outputs to its ancestor objects, and is thus unable to match the performance of our proposed approach in any of the problem types. This demonstrates the effectiveness of the recursive stream MLP's at making use of the structure underlying stream instances. 

\underline{\textit{Comparison to \texttt{PLOI}.}} \texttt{PLOI} does quite well on distractors, but performs on par with \texttt{ADAPTIVE} on all other problem types. This is because apart from the Distractors problems, the majority of objects in the scene are needed to find a plan. Therefore, \texttt{PLOI} does not provide a significant advantage in planning time by excluding low-scored objects, and may incur overhead by having to invoke the PDDLStream planner several times until all the necessary objects have been included. In contrast, our approach is able to retain the benefit of \texttt{PLOI} when irrelevant objects are present (as evidenced by the Distractors experiments) as well as provide speed-up in other problem types by making finer-grain predictions about the relevance of stream instances.

\begin{wrapfigure}{R}{4cm}
\centering
\begin{tikzpicture}

\definecolor{color0}{rgb}{0.12156862745098,0.466666666666667,0.705882352941177}
\definecolor{color1}{rgb}{1,0.498039215686275,0.0549019607843137}

\begin{axis}[
legend cell align={left},
legend style={
  fill opacity=0.8,
  draw opacity=1,
  text opacity=1,
  at={(-0.08,1.54)},
  anchor=north west,
  draw=white!80!black,
  nodes={scale=0.75, transform shape},
  font=\scriptsize
},
tick align=outside,
tick pos=left,
x grid style={white!69.0196078431373!black},
xlabel={Num Distractors},
label style={font=\scriptsize},
ticklabel style={font=\tiny},
xmin=8, xmax=52,
xtick style={color=black},
y grid style={white!69.0196078431373!black},
ylabel={Run Time (s)},
ymin=-1.23968414068222, ymax=95,
ytick style={color=black},
width=4cm
]
\addplot [semithick, color0]
table {%
10 27.836953719457
12 42.9660796324412
14 8.80210781097412
16 25.5253311793009
18 28.9076856613159
20 44.5588691830635
22 32.4760802057054
24 45.997216129303
26 55.945739712034
28 48.7438741922379
30 53.4140679836273
32 49.0010636647542
34 65.9398797988892
36 89.3690360387166
38 90
40 64.3701205253601
42 79.43411260181
44 74.958025654157
46 76.6812614713396
48 80.9999187588692
50 81.4899971485138
};
\addlegendentry{Adaptive (Unrefined)}
\addplot [semithick, color1]
table {%
10 3.76093157132467
12 6.36953401565552
14 3.10506272315979
16 5.41705663998922
18 5.87399892807007
20 6.4616858959198
22 6.8325252532959
24 7.18138761520386
26 8.3428635937827
28 7.92933714389801
30 9.49386048316956
32 7.34322500228882
34 8.7594388961792
36 13.1680072148641
38 11.8101433515549
40 15.5381011565526
42 13.5230958991581
44 12.0839349428813
46 13.2190117835999
48 12.8061344623566
50 13.0825231671333
};
\addlegendentry{Informed (Ours)}
\end{axis}

\end{tikzpicture}
\caption{Average planning time as a function of the number of distractors.} 
\label{fig:distractors-time}   
\end{wrapfigure}
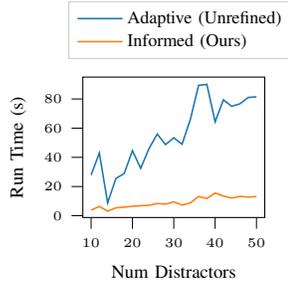
\underline{\textit{Generalization.}} All of the problems we use for evaluation are from a held-out test set. In addition, we generate these problems from test distributions with more objects, and more goal conjuncts, so the problem instances are unseen at training time. In general, the increased performance shown in Table \ref{tab:refined} demonstrates generalization to scenes with different numbers and placements of objects as well as different goals. The Distractors task is especially notable since this kind of problem is not part of any training set, and contained many more objects than any training problem. This demonstrates the ability of our architecture to generalize to problems of different size and composition.

The Distractors experiment also demonstrates a key advantage of our approach when compared to \texttt{ADAPTIVE}. Figure \ref{fig:distractors-time} shows the planning time of both methods as a function of the number of distractor objects. Even though the problems are no more geometrically or logically challenging, the simple inclusion of additional objects increases the planning time rapidly. In contrast, our method is able to maintain a comparatively constant planning time. 

\begin{wraptable}{r}{4cm}
\caption{Percentage of Stacking problems solved across varying stack heights. }
\tabcolsep=0.05cm
\scriptsize
\begin{tabular}{ccc}
\toprule
 & \textbf{\texttt{ADAPTIVE}} & \textbf{\texttt{INFORMED}} \\
 \textbf{Stack Height}   & \textbf{(Unrefined)} & \textbf{(Ours)} \\ \midrule
\textbf{2}              &    100.00 &      100.00 \\
\textbf{3}              &    100.00 &      100.00 \\
\textbf{4}              &     93.33 &      100.00 \\
\textbf{5}              &      0.00 &       72.22 \\
\textbf{6}              &      0.00 &       57.14 \\
\bottomrule
\end{tabular}

\label{tab:stacking_height}
\end{wraptable}As shown in Table \ref{tab:stacking_height}, \texttt{ADAPTIVE} is unable to solve any stacking problems with stacks of 5 or 6 blocks. \texttt{INFORMED}'s performance also deteriorates with stack height, however it is able to solve the majority of these problems despite the training set having only problems with up to 4 blocks. We found this improvement to be primarily due to a reduction in time spent solving the induced planning problems compared to \texttt{ADAPTIVE}. Note that no problems with a stack higher than 4 were seen during training, therefore our approach shows out-of-distribution generalization to problems with stacks composed of more objects. 

 \underline{\textit{Comparison of Planning Time Breakdown.}} We compared the breakdown of run time between \texttt{INFORMED} and \texttt{ADAPTIVE} in order to shed light on the differences in how the allotted time is spent. In figure \ref{fig:runtime_breakdown} we report these average breakdowns for stacking and clutter problems. We observed that the time spent solving induced planning problems is drastically decreased by the use of the learned model. In particular, search time for \texttt{ADAPTIVE} increases quite sharply as a function of the stack height in stacking problems, and the model is able to effectively address this issue. We also observed that the time spent performing model inference remains relatively low, though this also increases as a function of the number of objects.

\begin{figure}[h!]
     \centering
     \begin{tikzpicture}

\definecolor{color0}{rgb}{0.12156862745098,0.466666666666667,0.705882352941177}
\definecolor{color1}{rgb}{1,0.498039215686275,0.0549019607843137}
\definecolor{color2}{rgb}{0.172549019607843,0.627450980392157,0.172549019607843}
\def\width{0.55*\linewidth}
\pgfplotsset{every tick label/.append style={font=\tiny}}

\begin{groupplot}[
    group style={
        group size=2 by 1,
        horizontal sep = 1 cm,
        vertical sep = 0.0 cm
    }
]

\nextgroupplot[
legend cell align={center},
legend columns = 5,
legend style={fill opacity=0.8, draw opacity=1, text opacity=1, draw=white!80!black, at={(0.02,1.3)}, anchor=north west},
width = \width,
axis line style={white!86.6666666666667!black},
legend cell align={left},
tick align=outside,
x grid style={white!80!black},
xlabel={Stack Height},
xmajorticks=true,
xmin=1.415, xmax=4.585,
xtick style={color=white!15!black},
y grid style={white!80!black},
ylabel={Runtime (s)},
ymajorticks=true,
ymin=0, ymax=40,
ytick style={color=white!15!black},
tick pos=left
]
\addlegendimage{ybar,ybar legend,draw=none,fill=color0, pattern = north east lines}
\addlegendentry{Informed}
\addlegendimage{ybar,ybar legend,draw=none,fill=white}
\addlegendentry{Adaptive}
\draw[draw=white,fill=color0,] (axis cs:1.65,0) rectangle (axis cs:2,2.2995737195015);
\addlegendimage{ybar,ybar legend,draw=none,fill=color0}
\addlegendentry{Sampling}
\addlegendimage{ybar,ybar legend,draw=none,fill=color1}
\addlegendentry{Search}
\addlegendimage{ybar,ybar legend,draw=none,fill=color2}
\addlegendentry{Inference}
\draw[draw=white,fill=color0] (axis cs:1.65,0) rectangle (axis cs:2,2.80628267248294);
\draw[draw=white,fill=color0] (axis cs:2.65,0) rectangle (axis cs:3,5.72302770034115);
\draw[draw=white,fill=color0] (axis cs:3.65,0) rectangle (axis cs:4,7.825153438368);
\draw[draw=white,fill=color0,postaction={pattern=north east lines, pattern color=white}] (axis cs:2,0) rectangle (axis cs:2.35,2.49706534840328);
\draw[draw=white,fill=color0,postaction={pattern=north east lines, pattern color=white}] (axis cs:3,0) rectangle (axis cs:3.35,5.02397804793221);
\draw[draw=white,fill=color0,postaction={pattern=north east lines, pattern color=white}] (axis cs:4,0) rectangle (axis cs:4.35,6.96479746379334);
\draw[draw=white,fill=color1] (axis cs:1.65,2.80628267248294) rectangle (axis cs:2,3.14362835884094);
\draw[draw=white,fill=color1] (axis cs:2.65,5.72302770034115) rectangle (axis cs:3,7.28663231432438);
\draw[draw=white,fill=color1] (axis cs:3.65,7.825153438368) rectangle (axis cs:4,31.9603623072306);
\draw[draw=white,fill=color1,postaction={pattern=north east lines, pattern color=white}] (axis cs:2,2.49706534840328) rectangle (axis cs:2.35,2.89246629079183);
\draw[draw=white,fill=color1,postaction={pattern=north east lines, pattern color=white}] (axis cs:3,5.02397804793221) rectangle (axis cs:3.35,5.39037927985191);
\draw[draw=white,fill=color1,postaction={pattern=north east lines, pattern color=white}] (axis cs:4,6.96479746379334) rectangle (axis cs:4.35,9.19351584116618);
\draw[draw=white,fill=color2] (axis cs:1.65,3.14362835884094) rectangle (axis cs:2,3.14362835884094);
\draw[draw=white,fill=color2] (axis cs:2.65,7.28663231432438) rectangle (axis cs:3,7.28663231432438);
\draw[draw=white,fill=color2] (axis cs:3.65,31.9603623072306) rectangle (axis cs:4,31.9603623072306);
\draw[draw=white,fill=color2,postaction={pattern=north east lines, pattern color=white}] (axis cs:2,2.89246629079183) rectangle (axis cs:2.35,3.21067458788554);
\draw[draw=white,fill=color2,postaction={pattern=north east lines, pattern color=white}] (axis cs:3,5.39037927985191) rectangle (axis cs:3.35,5.66765414178371);
\draw[draw=white,fill=color2,postaction={pattern=north east lines, pattern color=white}] (axis cs:4,9.19351584116618) rectangle (axis cs:4.35,10.3032434622447);

\nextgroupplot[
width = \width,
axis line style={white!86.6666666666667!black},
tick align=outside,
x grid style={white!80!black},
xlabel={Num Blocks},
xmajorticks=true,
xmin=1.415, xmax=6.585,
xtick style={color=white!15!black},
ylabel style = {at = {(-0.11, 0.5)}},
y grid style={white!80!black},
ymajorticks=true,
ymin=0, ymax=60,
ytick style={color=white!15!black},
tick pos=left
]
\draw[draw=white,fill=color0] (axis cs:1.65,0) rectangle (axis cs:2,3.94728505352314);
\draw[draw=white,fill=color0] (axis cs:2.65,0) rectangle (axis cs:3,6.36479051401184);
\draw[draw=white,fill=color0] (axis cs:3.65,0) rectangle (axis cs:4,15.8168080606781);
\draw[draw=white,fill=color0] (axis cs:4.65,0) rectangle (axis cs:5,16.0077846008701);
\draw[draw=white,fill=color0] (axis cs:5.65,0) rectangle (axis cs:6,16.0836326358871);
\draw[draw=white,fill=color0,postaction={pattern=north east lines, pattern color=white}] (axis cs:2,0) rectangle (axis cs:2.35,3.25712248885516);
\draw[draw=white,fill=color0,postaction={pattern=north east lines, pattern color=white}] (axis cs:3,0) rectangle (axis cs:3.35,7.35013741120685);
\draw[draw=white,fill=color0,postaction={pattern=north east lines, pattern color=white}] (axis cs:4,0) rectangle (axis cs:4.35,10.8525044935771);
\draw[draw=white,fill=color0,postaction={pattern=north east lines, pattern color=white}] (axis cs:5,0) rectangle (axis cs:5.35,14.5752984466611);
\draw[draw=white,fill=color0,postaction={pattern=north east lines, pattern color=white}] (axis cs:6,0) rectangle (axis cs:6.35,14.8933852144775);
\draw[draw=white,fill=color1] (axis cs:1.65,3.94728505352314) rectangle (axis cs:2,16.8117182970047);
\draw[draw=white,fill=color1] (axis cs:2.65,6.36479051401184) rectangle (axis cs:3,31.0361406087875);
\draw[draw=white,fill=color1] (axis cs:3.65,15.8168080606781) rectangle (axis cs:4,49.1873962538583);
\draw[draw=white,fill=color1] (axis cs:4.65,16.0077846008701) rectangle (axis cs:5,41.0590175390244);
\draw[draw=white,fill=color1] (axis cs:5.65,16.0836326358871) rectangle (axis cs:6,38.5202422142029);
\draw[draw=white,fill=color1,postaction={pattern=north east lines, pattern color=white}] (axis cs:2,3.25712248885516) rectangle (axis cs:2.35,3.71941258907318);
\draw[draw=white,fill=color1,postaction={pattern=north east lines, pattern color=white}] (axis cs:3,7.35013741120685) rectangle (axis cs:3.35,9.75366756916046);
\draw[draw=white,fill=color1,postaction={pattern=north east lines, pattern color=white}] (axis cs:4,10.8525044935771) rectangle (axis cs:4.35,16.4409710679735);
\draw[draw=white,fill=color1,postaction={pattern=north east lines, pattern color=white}] (axis cs:5,14.5752984466611) rectangle (axis cs:5.35,19.9371992647648);
\draw[draw=white,fill=color1,postaction={pattern=north east lines, pattern color=white}] (axis cs:6,14.8933852144775) rectangle (axis cs:6.35,19.1608122587204);
\draw[draw=white,fill=color2] (axis cs:1.65,16.8117182970047) rectangle (axis cs:2,16.8117182970047);
\draw[draw=white,fill=color2] (axis cs:2.65,31.0361406087875) rectangle (axis cs:3,31.0361406087875);
\draw[draw=white,fill=color2] (axis cs:3.65,49.1873962538583) rectangle (axis cs:4,49.1873962538583);
\draw[draw=white,fill=color2] (axis cs:4.65,41.0590175390244) rectangle (axis cs:5,41.0590175390244);
\draw[draw=white,fill=color2] (axis cs:5.65,38.5202422142029) rectangle (axis cs:6,38.5202422142029);
\draw[draw=white,fill=color2,postaction={pattern=north east lines, pattern color=white}] (axis cs:2,3.71941258907318) rectangle (axis cs:2.35,4.0732531785965);
\draw[draw=white,fill=color2,postaction={pattern=north east lines, pattern color=white}] (axis cs:3,9.75366756916046) rectangle (axis cs:3.35,10.6340378284454);
\draw[draw=white,fill=color2,postaction={pattern=north east lines, pattern color=white}] (axis cs:4,16.4409710679735) rectangle (axis cs:4.35,17.4133159092494);
\draw[draw=white,fill=color2,postaction={pattern=north east lines, pattern color=white}] (axis cs:5,19.9371992647648) rectangle (axis cs:5.35,21.6864346265793);
\draw[draw=white,fill=color2,postaction={pattern=north east lines, pattern color=white}] (axis cs:6,19.1608122587204) rectangle (axis cs:6.35,21.6332992315292);

\end{groupplot}
\end{tikzpicture}
     \caption{A breakdown of the run times of \texttt{ADAPTIVE} and \texttt{INFORMED} into the constituent phases. We average the time in each phase over all problems which both algorithms solve successfully.}

        
         

    \label{fig:runtime_breakdown}
\end{figure}
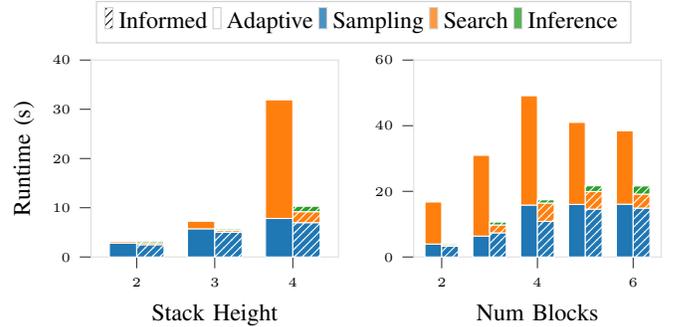



\underline{\textit{Effect of Additional Training Data.}} To explore to what extent more training data would lead to improved performance, we evaluated models trained on increasing subsets of a larger training set. We plot solution rates and run time over training set sizes up to 400 problems in figure \ref{fig:traindata-vs-solve}. We found that planning performance improves sharply, but improvements plateau after around 100 training problems.

\begin{figure}[h!]
\centering
\begin{tikzpicture}

\definecolor{color0}{rgb}{0.12156862745098,0.466666666666667,0.705882352941177}
\definecolor{color1}{rgb}{1,0.498039215686275,0.0549019607843137}
\pgfplotsset{every tick label/.append style={font=\tiny}}
\begin{groupplot}[group style={group size=2 by 1}]

\nextgroupplot[
legend cell align={right},
legend columns = 2,
legend style={fill opacity=0.8, draw opacity=1, text opacity=1, draw=white!80!black, nodes={scale=0.75, transform shape}, at={(1,1.3)},anchor=north, outer sep=0pt},
tick align=outside,
tick pos=left,
label style = {font=\scriptsize},
x grid style={white!69.0196078431373!black},
xlabel={Training Problems},
xmin=-15.8, xmax=419.8,
xtick style={color=black},
y grid style={white!69.0196078431373!black},
ylabel={Percent Solved},
ymin=16.46, ymax=60,
ytick style={color=black},
tick style = {font=\scriptsize},
width=4.5cm
]
\addplot [semithick, color0, mark=*]
table {%
4 19
40 26
100 54
200 54.5
320 55.6
400 55.8
};
\addlegendentry{Stacking}
\addplot [semithick, color1, mark=*]
table {%
4 18.4
20 42.6
40 46.2
100 54
320 55
400 59
};
\addlegendentry{Clutter}
\nextgroupplot[
tick align=outside,
tick pos=left,
x grid style={white!69.0196078431373!black},
xlabel={Training Problems},
xmin=-15.8, xmax=419.8,
xtick style={color=black},
y grid style={white!69.0196078431373!black},
label style = {font=\scriptsize},
tick style = {font=\scriptsize},
ylabel={Mean Solve Time},
ymin=43.4553860668659, ymax=77.3888715663433,
ytick style={color=black},
width=4.5cm
]
\addplot [semithick, color0, mark=*]
table {%
4 74.14
40 68.85
100 54.00
200 53.85
320 45.53
400 47.03
};

\addplot [semithick, color1, mark=*]
table {%
4 75.84
20 56.89
40 53.00
100 50.12
320 44.99
400 45.62
};

\end{groupplot}

\end{tikzpicture}
\caption{Effect of additional training data on planning performance in Stacking and Clutter tasks.}
\label{fig:traindata-vs-solve}   
\end{figure}
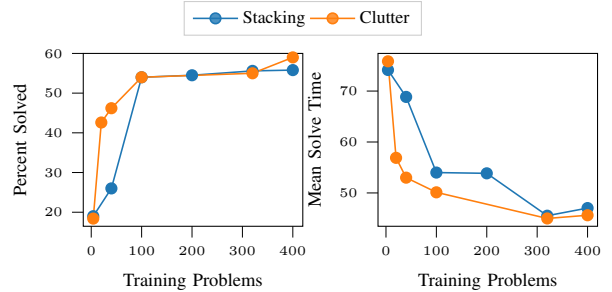

\underline{\textit{Multi-task Model.}} Since all the blocks world tasks share a single domain definition (i.e. predicates, actions and streams), we additionally experimented with training a single model on problem instances from all the tasks. We then evaluate this model on held-out instances from each problem type. We compare the performance of this model to that of the domain-specific models we evaluated in section \ref{sec:Experiments}. The performance of the combined model is usually on-par with the task-specific models, except in Sorting and Stacking, where the domain-specific model significantly outperforms.
\begin{table}[h!]
\tabcolsep=0.08cm
\centering
\caption{Solve percentage on held-out test problems for single-task vs multi-task models across Blocks World problem types.}
\begin{tabular}{c|cc|cc}
\toprule
& \multicolumn{2}{c|}{\textbf{ Single-Task }} & \multicolumn{2}{c}{\textbf{ Multi-Task}} \\
\midrule
\textbf{Domain} & \textbf{Solved \%} & \textbf{Time (s)}& \textbf{Solved \%} & \textbf{Time (s)} \\  \midrule
\textbf{Clutter      } & 54 &12.15&           \textbf{56} &    14.23  \\
\textbf{Non Monotonic} &  58 &33.88&         \textbf{65} &    35.69 \\
\textbf{Sorting      } & \textbf{77} & 21.07&          68 &    16.71 \\
\textbf{Stacking     } &  \textbf{54} & 7.94&        45 &    13.17 \\
\textbf{Distractors  } &  \textbf{100} &9.65&         99 &    17.90 \\
\bottomrule
\end{tabular}
\label{tab:combined}
\end{table}
 
\underline{\textit{Real-robot experiment.}} Real-robot experiment. In addition to the experiments described above, We deployed the solutions from our planner  on a Franka-Emika Panda robot as shown in Fig. 1. For these  experiments, the relevance model  is trained on stacking tasks in simulation,  but we execute and evaluate the trajectories it finds directly on the robot arm in an open-loop fashion.  


\section{Conclusion}
In this paper, we proposed a method for learning to search in Task and Motion Planning (TAMP) based on past experience computing successful task and motion plans in similar environments. A key component of our method is a learned, problem-conditioned ranking model that outputs the relevance of task-level subgoals and logical facts, which offers betters guidance in terms of task planning. We proposed the \texttt{INFORMED} algorithm which uses this model to order its expansion of streams with the goal of accelerating planning times for unseen problems. Our experiments demonstrate the effectiveness of this approach in the context of TAMP on a variety of problem types, where it leads to a significant speedup over existing methods, especially in problems with high branching factor, and including on a real robot manipulation setting.



\bibliographystyle{IEEEtran}
\bibliography{ICRA2022}

\begin{thebibliography}{10}
\providecommand{\url}[1]{#1}
\csname url@samestyle\endcsname
\providecommand{\newblock}{\relax}
\providecommand{\bibinfo}[2]{#2}
\providecommand{\BIBentrySTDinterwordspacing}{\spaceskip=0pt\relax}
\providecommand{\BIBentryALTinterwordstretchfactor}{4}
\providecommand{\BIBentryALTinterwordspacing}{\spaceskip=\fontdimen2\font plus
\BIBentryALTinterwordstretchfactor\fontdimen3\font minus
  \fontdimen4\font\relax}
\providecommand{\BIBforeignlanguage}[2]{{%
\expandafter\ifx\csname l@#1\endcsname\relax
\typeout{** WARNING: IEEEtran.bst: No hyphenation pattern has been}%
\typeout{** loaded for the language `#1'. Using the pattern for}%
\typeout{** the default language instead.}%
\else
\language=\csname l@#1\endcsname
\fi
#2}}
\providecommand{\BIBdecl}{\relax}
\BIBdecl

\bibitem{garrett2020pddlstream}
C.~R. Garrett, T.~Lozano-Pérez, and L.~P. Kaelbling, ``Pddlstream: Integrating
  symbolic planners and blackbox samplers via optimistic adaptive planning,''
  2020.

\bibitem{garrett2020integrated}
C.~R. Garrett, R.~Chitnis, R.~Holladay, B.~Kim, T.~Silver, L.~P. Kaelbling, and
  T.~Lozano-Pérez, ``Integrated task and motion planning,'' 2020.

\bibitem{hierarchical_tamp_now}
L.~P. Kaelbling and T.~Lozano-Pérez, ``Hierarchical task and motion planning
  in the now,'' in \emph{2011 IEEE International Conference on Robotics and
  Automation}, 2011, pp. 1470--1477.

\bibitem{toussaint_stable_modes}
M.~Toussaint, K.~Allen, K.~Smith, and J.~Tenenbaum, ``Differentiable physics
  and stable modes for tool-use and manipulation planning,'' in
  \emph{Proceedings of Robotics: Science and Systems}, Pittsburgh,
  Pennsylvania, June 2018.

\bibitem{ghallab1998pddl}
M.~Ghallab, A.~Howe, C.~Knoblock, D.~McDermott, A.~Ram, M.~Veloso, D.~Weld, and
  D.~Wilkins, ``Pddl--the planning domain definition language, technical
  report,'' \emph{Yale University, New Haven, CT}, 1998.

\bibitem{https://doi.org/10.48550/arxiv.2103.05456}
\BIBentryALTinterwordspacing
T.~Ren, G.~Chalvatzaki, and J.~Peters, ``Extended tree search for robot task
  and motion planning,'' 2021. [Online]. Available:
  \url{https://arxiv.org/abs/2103.05456}
\BIBentrySTDinterwordspacing

\bibitem{9196681}
C.~R. Garrett, C.~Paxton, T.~Lozano-Pérez, L.~P. Kaelbling, and D.~Fox,
  ``Online replanning in belief space for partially observable task and motion
  problems,'' in \emph{2020 IEEE International Conference on Robotics and
  Automation (ICRA)}, 2020, pp. 5678--5684.

\bibitem{cambon}
S.~Cambon, R.~Alami, and F.~Gravot, ``A hybrid approach to intricate motion,
  manipulation and task planning,'' \emph{The International Journal of Robotics
  Research}, vol.~28, no.~1, pp. 104--126, 2009.

\bibitem{plaku2010sampling}
E.~Plaku and G.~D. Hager, ``Sampling-based motion and symbolic action planning
  with geometric and differential constraints,'' in \emph{2010 IEEE
  International Conference on Robotics and Automation}.\hskip 1em plus 0.5em
  minus 0.4em\relax IEEE, 2010, pp. 5002--5008.

\bibitem{toussaint2015logic}
\BIBentryALTinterwordspacing
M.~Toussaint, ``Logic-geometric programming: An optimization-based approach to
  combined task and motion planning,'' pp. 1930--1936, 2015. [Online].
  Available: \url{http://ijcai.org/Abstract/15/274}
\BIBentrySTDinterwordspacing

\bibitem{z3_tamp}
N.~T. Dantam, Z.~K. Kingston, S.~Chaudhuri, and L.~E. Kavraki, ``An incremental
  constraint-based framework for task and motion planning,'' \emph{The
  International Journal of Robotics Research}, vol.~37, no.~10, pp. 1134--1151,
  2018.

\bibitem{hauser2009integrating}
K.~Hauser and J.-C. Latombe, ``Integrating task and prm motion planning:
  Dealing with many infeasible motion planning queries,'' in \emph{ICAPS09
  Workshop on Bridging the Gap between Task and Motion Planning}.\hskip 1em
  plus 0.5em minus 0.4em\relax Citeseer, 2009.

\bibitem{wang2020learning}
\BIBentryALTinterwordspacing
Z.~Wang, C.~R. Garrett, L.~P. Kaelbling, and T.~Lozano{-}P{\'{e}}rez,
  ``Learning compositional models of robot skills for task and motion
  planning,'' \emph{CoRR}, vol. abs/2006.06444, 2020. [Online]. Available:
  \url{https://arxiv.org/abs/2006.06444}
\BIBentrySTDinterwordspacing

\bibitem{kim2018guiding}
B.~Kim, L.~Kaelbling, and T.~Lozano-P{\'e}rez, ``Guiding search in continuous
  state-action spaces by learning an action sampler from off-target search
  experience,'' in \emph{Proceedings of the AAAI Conference on Artificial
  Intelligence}, vol.~32, no.~1, 2018.

\bibitem{silver2021learning}
T.~Silver, R.~Chitnis, J.~Tenenbaum, L.~P. Kaelbling, and T.~Lozano-P{\'e}rez,
  ``Learning symbolic operators for task and motion planning,'' \emph{arXiv
  preprint arXiv:2103.00589}, 2021.

\bibitem{loula2020learning}
J.~Loula, K.~Allen, T.~Silver, and J.~Tenenbaum, ``Learning constraint-based
  planning models from demonstrations,'' in \emph{2020 IEEE/RSJ International
  Conference on Intelligent Robots and Systems (IROS)}.\hskip 1em plus 0.5em
  minus 0.4em\relax IEEE, 2020, pp. 5410--5416.

\bibitem{diehl2021automated}
M.~Diehl, C.~Paxton, and K.~Ramirez-Amaro, ``Automated generation of robotic
  planning domains from observations,'' in \emph{2021 IEEE/RSJ International
  Conference on Intelligent Robots and Systems (IROS)}.\hskip 1em plus 0.5em
  minus 0.4em\relax IEEE, 2021, pp. 6732--6738.

\bibitem{deep_pddl_tamp_policies}
K.~Kase, C.~Paxton, H.~Mazhar, T.~Ogata, and D.~Fox, ``Transferable task
  execution from pixels through deep planning domain learning,'' \emph{IEEE
  International Conference on Robotics and Automation (ICRA)}, pp.
  10\,459--10\,465, 2020.

\bibitem{Zhu2020HierarchicalPF}
Y.~Zhu, J.~Tremblay, S.~Birchfield, and Y.~Zhu, ``Hierarchical planning for
  long-horizon manipulation with geometric and symbolic scene graphs,''
  \emph{ArXiv}, vol. abs/2012.07277, 2020.

\bibitem{driess2020deep}
D.~Driess, J.-S. Ha, and M.~Toussaint, ``Deep visual reasoning: Learning to
  predict action sequences for task and motion planning from an initial scene
  image,'' 2020.

\bibitem{driess2020deeph}
D.~Driess, O.~Oguz, J.-S. Ha, and M.~Toussaint, ``Deep visual heuristics:
  Learning feasibility of mixed-integer programs for manipulation planning,''
  in \emph{2020 IEEE International Conference on Robotics and Automation
  (ICRA)}.\hskip 1em plus 0.5em minus 0.4em\relax IEEE, 2020, pp. 9563--9569.

\bibitem{beomjoon2020learning}
\BIBentryALTinterwordspacing
B.~Kim and L.~Shimanuki, ``Learning value functions with relational state
  representations for guiding task-and-motion planning,'' in \emph{Proceedings
  of the Conference on Robot Learning}, ser. Proceedings of Machine Learning
  Research, L.~P. Kaelbling, D.~Kragic, and K.~Sugiura, Eds., vol. 100.\hskip
  1em plus 0.5em minus 0.4em\relax PMLR, 30 Oct--01 Nov 2020, pp. 955--968.
  [Online]. Available: \url{https://proceedings.mlr.press/v100/kim20a.html}
\BIBentrySTDinterwordspacing

\bibitem{https://doi.org/10.48550/arxiv.2203.10568}
\BIBentryALTinterwordspacing
L.~Xu, T.~Ren, G.~Chalvatzaki, and J.~Peters, ``Accelerating integrated task
  and motion planning with neural feasibility checking,'' 2022. [Online].
  Available: \url{https://arxiv.org/abs/2203.10568}
\BIBentrySTDinterwordspacing

\bibitem{kim2019learning}
B.~Kim, Z.~Wang, L.~P. Kaelbling, and T.~Lozano-P{\'e}rez, ``Learning to guide
  task and motion planning using score-space representation,'' \emph{The
  International Journal of Robotics Research}, vol.~38, no.~7, pp. 793--812,
  2019.

\bibitem{silver2020planning}
T.~Silver, R.~Chitnis, A.~Curtis, J.~Tenenbaum, T.~Lozano-Perez, and L.~P.
  Kaelbling, ``Planning with learned object importance in large problem
  instances using graph neural networks,'' 2020.

\bibitem{gnad2019learning}
D.~Gnad, A.~Torralba, M.~Dominguez, C.~Areces, and F.~Bustos, ``Learning how to
  ground a plan -partial grounding in classical planning,'' 01 2019.

\bibitem{socher2011parsing}
R.~Socher, C.~C. Lin, C.~Manning, and A.~Y. Ng, ``Parsing natural scenes and
  natural language with recursive neural networks,'' in \emph{Proceedings of
  the 28th international conference on machine learning (ICML-11)}, 2011, pp.
  129--136.

\bibitem{lagriffoul2018platform}
F.~Lagriffoul, N.~T. Dantam, C.~Garrett, A.~Akbari, S.~Srivastava, and L.~E.
  Kavraki, ``Platform-independent benchmarks for task and motion planning,''
  \emph{IEEE Robotics and Automation Letters}, vol.~3, no.~4, pp. 3765--3772,
  2018.

\bibitem{battaglia2018relational}
P.~W. Battaglia, J.~B. Hamrick, V.~Bapst, A.~Sanchez-Gonzalez, V.~Zambaldi,
  M.~Malinowski, A.~Tacchetti, D.~Raposo, A.~Santoro, R.~Faulkner, C.~Gulcehre,
  F.~Song, A.~Ballard, J.~Gilmer, G.~Dahl, A.~Vaswani, K.~Allen, C.~Nash,
  V.~Langston, C.~Dyer, N.~Heess, D.~Wierstra, P.~Kohli, M.~Botvinick,
  O.~Vinyals, Y.~Li, and R.~Pascanu, ``Relational inductive biases, deep
  learning, and graph networks,'' 2018.

\end{thebibliography}

\newpage
\pagestyle{empty}
\appendix
\section{Supplementary Materials}
\subsection{Completeness of the \texttt{INFORMED} algorithm}
The algorithm described above inherits many of the same theoretical guarantees as those described in \cite{garrett2020pddlstream}. In particular, the proof of semi-completeness for optimistic algorithms (i.e. theorem 3) can be extended to the \texttt{INFORMED} algorithm under some assumptions about the relevance scores.

At a high level, the proof for theorem 3 argues that if $\tilde{l}$ is the minimum level at which a solution exists, a finite number of iterations will elapse before the algorithm considers all stream instances with level $l \leq \tilde{l}$, at which point the solution must be found.

The main point of departure between these optimistic algorithms and \texttt{INFORMED} is the use of predicted relevance scores, rather than level (equation \ref{eq:1}) for ordering the instantiation of streams. Therefore, to extend the argument to \texttt{INFORMED}, we simply need to show that there is a finite number of stream instances with a score greater than $y$ for any $y > 0$.


\begin{theorem}
The \texttt{INFORMED} algorithm is semi-complete under the following assumptions: (1) that the score that $\mathcal{M}_{\theta}$ assigns to a stream is strictly less than the scores assigned to its parents, and (2) that the relevance scores assigned to successive evaluations of the same stream instance are monotonically decreasing.
\end{theorem}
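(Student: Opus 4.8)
The plan is to follow the skeleton of the semi-completeness proof for optimistic algorithms (Theorem~3 of \cite{garrett2020pddlstream}) and isolate the one place where \texttt{INFORMED} differs, namely the priority used to order stream instantiation. As the excerpt already observes, the reduction is: it suffices to show that for every $y>0$ only finitely many stream instances, and finitely many evaluations of each, receive a relevance score exceeding $y$. Granting this, the argument closes as in \cite{garrift2020pddlstream}\footnote{\relax}: if a solution exists, there is a finite collection $F$ of (stream instance, evaluation-count) pairs whose optimistic certified facts, once added to $I_{opt}$, expose the solution's preimage to \texttt{OPTIMISTIC-SOLVE}, together with finitely many further evaluations needed by \texttt{PROCESS-STREAMS} to ground the plan; one then argues that every element of $F$ is handled after finitely many iterations, at which point the \texttt{SHOULD-PLAN}/\texttt{PROCESS-STREAMS} machinery (triggered at least once every $K$ iterations) returns a solution.

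First I would prove the key finiteness lemma. For a fixed bound $L$ there are only finitely many stream instances of level at most $L$: by induction on $L$ using the recursion for $\texttt{LEVEL}$ in (\ref{eq:1}), level-$1$ instances use only facts of $\initialstate$, which is finite, and instances of level $\le L$ feed only finitely many new facts into instances of level $L{+}1$; consequently only finitely many (instance, count) pairs have level $\le L$. It therefore suffices to show that a score $\ge y$ forces level $\le L_y$ for some $L_y$ depending only on $y$ and $\mathcal M_\theta$. From any pair $(s(\bar x),c)$ one can step to a pair of level exactly one smaller and strictly larger score: if $c>0$, take the previous evaluation $(s(\bar x),c{-}1)$ (assumption~(2)); if $c=0$, take an evaluation of the maximum-level parent stream of $s(\bar x)$ (assumption~(1)). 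Iterating produces a strictly increasing chain of scores of length $\texttt{LEVEL}(s(\bar x))$, all lying in $(0,1]$. To convert "strictly increasing'' into a genuine length bound I would use that $\mathcal M_\theta$ is a fixed model with bounded parameters, so that the multiplicative gap from a child's (resp.\ later evaluation's) score up to its parent's (resp.\ earlier evaluation's) is bounded away from $1$ by a constant $\gamma<1$; then a score $\ge y$ forces chain length at most $1+\log y/\log\gamma=:L_y$, which is the lemma.

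With the lemma in hand the proof finishes as follows. Because the expansion is never exhausted (there is no plan whose preimage lies in $\initialstate$, so \texttt{EXPAND} always yields successors), every stream instance has at least one child, and assumption~(1) then forces every score to be strictly positive. Fix $(r,c)\in F$ with score $y>0$. By the lemma the set $H$ of (instance, count) pairs with score $\ge y$ is finite, and by assumption~(1) it is closed under taking ancestors; an induction on level within $H$ shows the priority queue of Algorithm~\ref{alg:informed} pops every element of $H$ --- in particular $(r,c)$ --- within at most $|H|$ expansion iterations beyond the finitely many initial \texttt{INSTANTIATE} entries, since whenever an $H$-pair is available the queue pops one, and $H$-pairs become available precisely as their ($H$-)ancestors are popped. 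Applying this to each of the finitely many members of $F$ shows all of $F$ is processed in finitely many iterations; thereafter \texttt{OPTIMISTIC-SOLVE} returns an optimistic plan whose stream plan consists of already-processed instances and \texttt{PROCESS-STREAMS} grounds it, so \texttt{INFORMED} returns a solution.

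The main obstacle is the finiteness lemma, and specifically the passage from "each stream's score is strictly below its parents' '' to an exponential-in-level decay: bare strict inequality is insufficient, since a strictly increasing sequence in $(0,1]$ can be arbitrarily long, so the proof genuinely leans on a quantitative strengthening of assumption~(1) (a uniform contraction factor), which the fixed, bounded architecture of $\mathcal M_\theta$ supplies; an analogous strengthening of assumption~(2) is needed so each instance has only finitely many evaluations scoring above $y$. A secondary point is the termination branch ``$t\le 0$ and $Q.\mathrm{len}()=0$'': one must observe that when a solution exists the queue never empties, so this branch is never taken.
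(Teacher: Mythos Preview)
Your proposal is correct and follows essentially the same route as the paper: both reduce semi-completeness to the lemma that a relevance score at least $y^*$ forces a bounded stream level, then invoke Theorem~3 of \cite{garrett2020pddlstream} for finiteness at bounded level. The paper posits a uniform \emph{additive} gap $\epsilon_{child}>0$ (and $\epsilon_{eval}>0$) between parent/child scores and successive evaluations, yielding the bound $\text{max}_l(y^*)\le\frac{1-y^*}{\epsilon_{child}}\bigl(1+\frac{1-y^*}{\epsilon_{eval}}\bigr)$, whereas you use a multiplicative contraction $\gamma<1$; either way this is a quantitative strengthening beyond the stated strict-inequality assumptions, and you are right to flag that explicitly --- the paper simply asserts the positive gaps exist. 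Your additional care about queue mechanics and the non-emptying of $Q$ is extra rigor the paper omits.
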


\begin{proof}
Let $\pi$ be a feasible solution to a given PDDLStream problem. Then $\text{PREIMAGE}(\pi)$ is the set of stream-certified facts which must be added the initial conditions $\mathcal I$ so that $\texttt{SEARCH}$ may find the correct optimistic solution. Let $y^*$ be the minimum score assigned to any of the elements of $\text{PREIMAGE}(\pi)$.

We can show that under conditions (1) and (2), a stream with score $\geq y^*$ has a maximum possible level, as defined in equation \ref{eq:1}.
To see this, let $\epsilon_{child} > 0$ be the minimum difference of scores between a stream and its child, and let $\epsilon_{eval} > 0$ be the minimum difference between successive evaluations of the same stream. We can bound the maximum level of a stream with score $y^*$ as $\text{max}_l(y^*) \leq 1 + \lceil\frac{1 - y^*}{\epsilon_{eval}}\rceil + \text{max}_l(y^* + \epsilon_{child}) \le \frac{1-y^*}{e_{child}} (1 + \frac{1-y^*}{e_{eval}})$.

\texttt{INFORMED} will instantiate the stream with the highest score at every iteration. By the argument above, every stream with score $\geq y^*$ must have level $\leq \text{max}_l(y^*)$. By theorem 3, we know that the number of stream instances with level $\leq max_l(y^*)$ is finite. Therefore, we conclude that a finite number of iterations will elapse before the algorithm considers all stream instances with score $\geq y^*$, at which point a solution must be found.

\end{proof}

In practice we enforce condition (2) by decaying the relevance score by a factor $\gamma < 1$ after every evaluation. Condition (1) is enforced by defining the score of a stream $s$ as: $$score(s) = M_\theta(s) \cdot \underset{\hat s \in s.parents}{\text{min}} score(\hat s)$$
During training, we approximate the min with a weighted average of the parent scores, where the weights are computed as a softmax over the negative scores.

\subsection{Model Architecture and Hyperparameters}
In section \ref{sec:relevance}, we have outlined the representation of planning problems as a relational graph, augmented with features that encode the 3D position of each object. This Problem Graph is the input to a GNN, composed of 3 graph network (GN) blocks  \cite{battaglia2018relational}. Each GN block is itself composed of a node model $\phi_v$, and an edge model $\phi_e$ which share the same architecture in table \ref{tab:node_edge_model}.

\begin{table}[h]
    \centering
    \begin{tabular}{c|c|}
        hidden size & 64  \\
        output size & 64  \\
        number of layers & 2  \\
        hidden activation & LeakyRelu \\
        output activation & Linear \\
    \end{tabular}
    \caption{Hyperparameters of node and edge encoders $\phi_{[v/e]}^{[1/2/ 3]}$ which comprise each of 3 GN blocks. }
    \label{tab:node_edge_model}
\end{table}

While this GNN produces embeddings of both its nodes and its edges as output, we only use the node embeddings, which serve as the inputs to the stream MLPs. Each stream MLP has a different number of inputs $I_{s}$ and outputs $O_{s}$ (depending on the domain definitions), however each of the input and output objects will be represented by a fixed size embedding. We detail the architecture of each MLP in \ref{tab:mlp_model}.

\begin{table}[h]
    \centering
    \begin{tabular}{l|c|c|c|}
        & $M_{s}^{encoder}$ & $M_{s}^{scorer}$ & $M_{s}^{decoder}$ \\
        \midrule
        input size & $64 \cdot I_{s}$  & 64 & 64\\
        number of layers & 1 & 2 & 2  \\
        hidden size & 64 & 64 & 64\\
        hidden activation & - & LeakyRelu & LeakyRelu \\
        output size & $64$  & 1 & $64 \cdot O_{s}$\\
    \end{tabular}
    \caption{Architecture and hyperparameters for each stream MLP. Note that the size of the input to the encoder, and output of the decoder depend on the stream's domain definitions. }
    \label{tab:mlp_model}
\end{table}


\subsection{Problem Distribution Details}
\begin{table}[h!]
\caption{To supplement the descriptions of environments given in section \ref{sec:Experiments}, we provide details of object numbers and placement distributions for the train/test sets.}
\begin{tabular}{lll}
\toprule
\multicolumn{1}{c}{}   & \multicolumn{1}{c}{\textbf{\begin{tabular}[c]{@{}c@{}}Number of Objects\\ Train / Test\end{tabular}}} & \multicolumn{1}{c}{\textbf{Placement Distribution}}                               \textbf{} \\ \midrule
\textbf{Clutter}       & \begin{tabular}[c]{@{}l@{}}Blocks: 2-4 / 2-6\\ Blockers: 4-8 / 4-12\end{tabular}                      & Poisson Disc                                                                                   \\ \hline
\textbf{Non Monotonic} & \begin{tabular}[c]{@{}l@{}}Blocks: 1-3 / 2-6\\ Blockers: 1-3 / 2-6\end{tabular}                       & \begin{tabular}[c]{@{}l@{}}Uniform\\ Adjacent\end{tabular}                                     \\ \hline
\textbf{Sorting}       & \begin{tabular}[c]{@{}l@{}}Blocks: 2-7 / 2-10\\ Blockers: 2-7 / 2-10\end{tabular}                     & Poisson Disc                                                                                  \\ \hline
\textbf{Stacking}      & \begin{tabular}[c]{@{}l@{}}Blocks: 2-4 / 2-7\\ Blockers: 0 / 0\end{tabular}                           & Uniform                                                                                       \\ \hline
\textbf{Distractors}   & \begin{tabular}[c]{@{}l@{}}Blocks: - / 2-3\\ Blockers: - / 10-50\end{tabular}                         & \begin{tabular}[c]{@{}l@{}}Uniform(red/blue tables)\\ Uniform(green/purple tables)\end{tabular}          \\ \hline
\textbf{Kitchen}       & \begin{tabular}[c]{@{}l@{}}Cabbage: 1-3 / 1-4\\ Radish: 0-3 / 0-3\\ Glass: 0-2 / 0-2 \\ Goals: 1-7 / 1-9\end{tabular}     & Uniform               \\\bottomrule  \bottomrule
\end{tabular}

\end{table}
\end{document}